\documentclass[]{fairmeta}

\usepackage{wrapfig}
\usepackage{dsfont}
\usepackage{tabularx}
\usepackage{tikz}
\usetikzlibrary{arrows.meta}
\usepackage{amsmath}
\usepackage{amssymb}
\usepackage{mathtools}
\usepackage{amsthm}
\usepackage{thm-restate}
\usepackage[textsize=tiny]{todonotes}
\usepackage{pdflscape}
\usepackage{algorithm}
\usepackage{algorithmic}

\crefname{section}{Sec.}{Sec.}
\crefname{appendix}{App.}{App.}
\crefname{definition}{Def.}{Defs.}
\crefname{proposition}{Prop.}{Props.}
\crefname{equation}{Eq.}{Eqs.}
\crefname{table}{Tab.}{Tabs.}
\crefname{figure}{Fig.}{Figs.}
\crefname{algorithm}{Alg.}{Algs.}
\crefname{theorem}{Thm.}{Thms.}
\crefname{modellingassumption}{Assump.}{Assumps.}

\theoremstyle{plain}
\newtheorem{theorem}{Theorem}[section]
\newtheorem{proposition}{Proposition}[section]
\newtheorem{lemma}[theorem]{Lemma}
\newtheorem{corollary}[theorem]{Corollary}
\theoremstyle{definition}
\newtheorem{definition}[theorem]{Definition}

\theoremstyle{remark}

\definecolor{darkgreen}{RGB}{0,128,0}
\definecolor{eccolor}{RGB}{35,107,142}
\definecolor{diagramgray}{RGB}{89,89,89}
\definecolor{diagramgreen}{RGB}{105,156,82}

\title{Speculative Sampling For Faster Molecular Dynamics}

\author[1,2,3,*]{Arthur Kosmala}
\author[2,3,4]{Stephan Günnemann}
\author[1,\dagger]{Meng Gao}
\author[1,\dagger]{Brandon Wood}

\affiliation[1]{FAIR at Meta}
\affiliation[2]{School of Computation, Information \& Technology, Technical University of Munich}
\affiliation[3]{Munich Data Science Institute}
\affiliation[4]{Munich Center For Machine Learning}

\contribution[*]{Work done during internship at Meta FAIR}
\contribution[\dagger]{Shared corresponding author}

\abstract{Molecular dynamics (MD) is a key tool for simulating the dynamical behavior of atomic systems. However, MD is inherently serial, which makes it difficult to increase single-system throughput with concurrent compute. To address this, we introduce \textbf{L}angevin \textbf{S}peculative \textbf{D}ynamics (\textbf{LSD}), a distributed and model-agnostic speculative sampler for accelerating MD \textit{without adding relative error}. Inspired by speculative methods in language and diffusion modeling, LSD uses a draft model to propose fast simulation steps and verifies them in parallel with a slower target model, applying a transport map from the draft to the target distribution. We extend speculative sampling to second-order Langevin dynamics, derive the achievable speedup as a function of physical parameters, show that LSD generalizes across different systems and draft-target combinations with a 3-9x speedup, and confirm theoretically and empirically that LSD samples trajectories from its target model distribution.}

\correspondence{\email{bmwood@meta.com}, \email{rgao@meta.com}, \email{a.kosmala@tum.de}}

\metadata[Code]{\url{https://github.com/facebookresearch/LSD}}

\begin{document}

\maketitle

\section{Introduction}
\begin{wrapfigure}{r}{0.5\textwidth}
  \centering
  \vskip -0.1in
  \includegraphics[width=\linewidth]{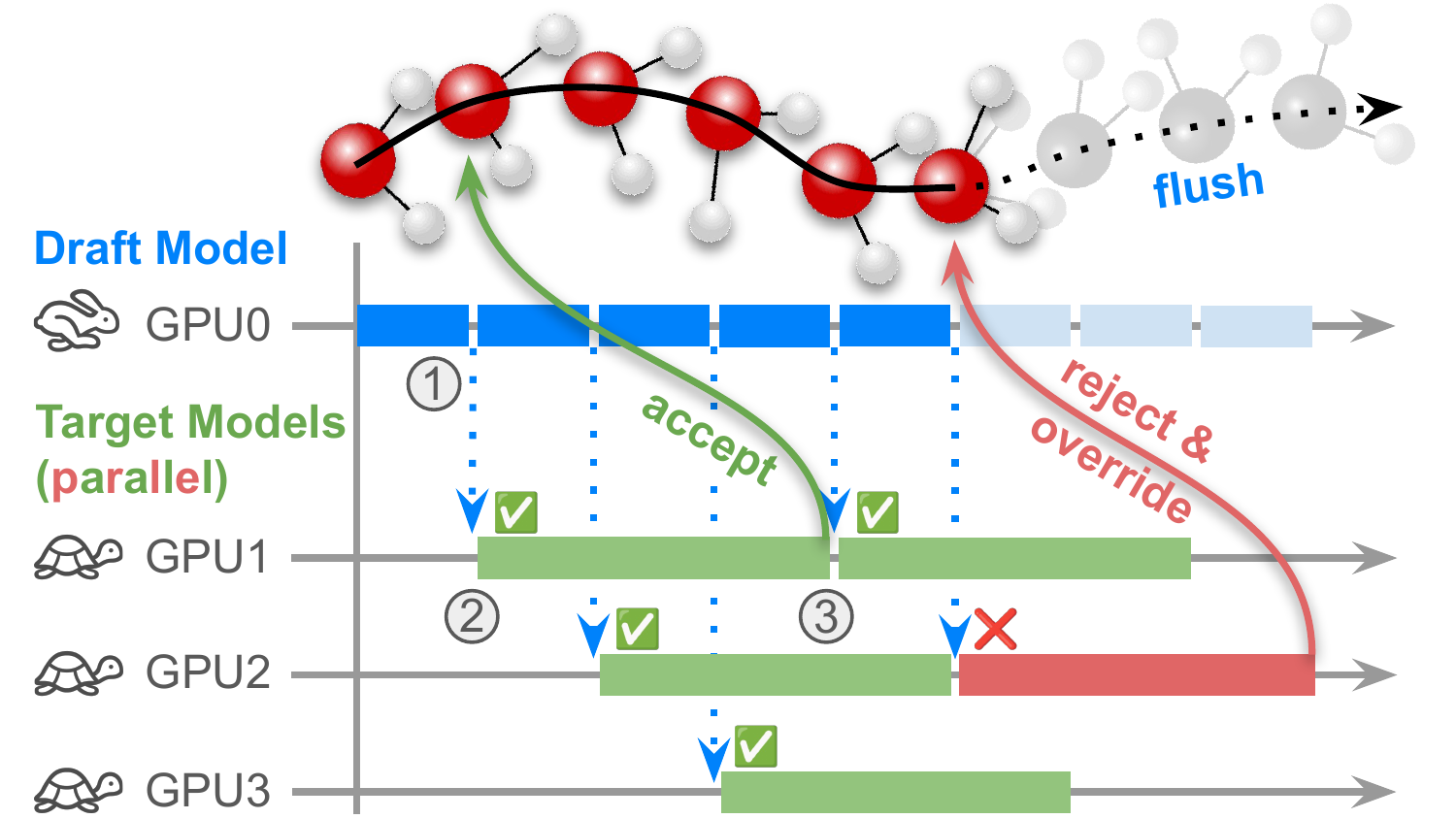}
  \caption{LSD algorithm overview:
  (1) a draft model proposes a stream of fast simulation steps;
  (2) in parallel, target model instances asynchronously recompute forces on draft steps;
  (3) as a target model call returns, a stochastic transport map from the draft to the target step distribution either accepts the step (as frequently as possible), or else rejects and overrides it, in which case the draft is rolled back. The result is a parallelism-driven speedup with a draft-independent guarantee of target-distributed trajectories.} \label{fig:fig1}
  \vskip -0.1in
\end{wrapfigure}

Molecular dynamics (MD) is the standard approach for researchers to study the time evolution of atomic systems across chemistry, biology, and materials science \cite{frenkel2023understanding}. MD simulations use a potential to compute per-atom forces that are then used to numerically integrate Newton's equations of motion. One of the major limitations of MD is that numerical stability requires small individual time steps \cite{tildsley_md}, typically $\sim$ 0.5-1 fs, while many target processes only happen at much longer time scales of 100+ ns, thus putting some out of practical reach due to a prohibitive number of serial steps.

One recent development relevant to MD is the introduction of machine learned interatomic potentials (MLIP) \cite{Batzner_2022, batatia2022mace, wood2025uma, rhodes2025orbv3}. MLIPs present an avenue to provide quantum-chemistry level accuracy with linear scaling. However, MLIPs are much slower than classical force fields per force call and thus running a large number of serial steps becomes computationally infeasible.

Large language models with long context windows face an analogous problem at inference time. Specifically, each generated token requires an expensive forward pass conditioned on all prior tokens, so autoregressive decoding is inherently serial. One approach designed to alleviate this problem without a drop in performance is called speculative sampling \cite{leviathan2023fast, chen2023accelerating, de-bortoli25a}, where a fast draft model is used to make approximate proposals that are validated in parallel by a slower target model. We adapt these ideas to MD and investigate if we can accelerate an inherently serial process while provably preserving a target distribution of physical trajectories.

Similar to other machine learning domains, MLIPs exhibit an accuracy–speed trade-off where more accurate models tend to be slower at inference time while less accurate models tend to be faster \cite{wood2025uma}. This Pareto front naturally provides a pool of candidate draft and target models for speculative sampling.

In this work we introduce Langevin Speculative Dynamics (LSD), a speculative sampler for accelerating MD. LSD uses a fast draft model to propose steps, which are validated in parallel by a target model. Formally, LSD generalizes speculative sampling to the second-order Langevin equation, thus making it applicable to MD. We establish theoretical guarantees that the distribution of LSD-sampled trajectories matches that of the target model for arbitrary draft models. We provide further theoretical analysis of LSD, relating the achievable speedup to a number of factors (draft–target pairing, number of atoms, temperature, etc) and develop speculative error correction which informs current draft predictions by past errors to reduce rejection rates by up to $75\%$. Finally, we provide empirical results across representative systems that show good agreement between theory and experiment and error-free speedups of up to 9×.

\section{Foundations}
\label{sec:foundations}
\subsection{Molecular Dynamics}
\label{subsec:molecular_dynamics}
\textbf{MD principles and basic notation.}
Molecular dynamics solves the classical laws of motion numerically for a system of interacting atoms.
The system's state at any time $t$ is described by the positions
$\mathbf{q}_i(t) \in \mathbb{R}^3$, momenta $\mathbf{p}_i(t) \in \mathbb{R}^3$
and masses $m_i \in \mathbb{R}$ of the particles $i \in \{1, \ldots, N\}$. We will often compactly denote the combined vectors of all positions and momenta by $\mathbf{q} \in \mathbb{R}^{3N}, \mathbf{p} \in \mathbb{R}^{3N}$
and use the mass matrix $\mathbf{M} \in \mathbb{R}^{3N \times 3N}$ which is diagonal with entries $m_1, m_1, m_1, \ldots, m_N, m_N, m_N$. The time evolution is prescribed by the total energy (Hamiltonian) function,
$H(\mathbf{q}, \mathbf{p}) = \frac{1}{2} \mathbf{p}^\text{T} \mathbf{M}^{-1} \mathbf{p} + U(\mathbf{q})$, where $U(\mathbf{q})$ is the \emph{interatomic potential} (modeled, e.g. as an MLIP) and its negative gradient $\mathbf{F}(\mathbf{q}) = -\frac{\partial}{\partial \mathbf{q}} U(\mathbf{q})$ are the forces on the atoms. The system evolves according to Hamilton's equations
\(
    \frac{d \mathbf{q}}{dt} = \frac{\partial H}{\partial \mathbf{p}} = \mathbf{M}^{-1} \mathbf{p}, \; \frac{d \mathbf{p}}{dt} = -\frac{\partial H}{\partial \mathbf{q}} = \mathbf{F}(\mathbf{q}).
\)

\textbf{The Langevin thermostat.}
Hamilton's equations describe an isolated system that exchanges no energy with its environment.
To model more realistic experimental conditions, we frequently want to couple the system to a heat bath (thermostat) that exchanges energy to equilibrate the system at a controlled temperature $T$.
A popular choice for modeling the interaction with a heat bath is the \emph{Langevin thermostat}. It couples Hamilton's equations with a friction and a diffusive term, resulting in the following system of (Itô) stochastic differential equations:
\begin{align}
\label{eq:langevin_dynamics}
&d\mathbf{q}(t) = \mathbf{M}^{-1} \mathbf{p}(t) \, dt\\
&d\mathbf{p}(t) = \left[\mathbf{F}(\mathbf{q}(t)) - \gamma \mathbf{p}(t)\right] dt \nonumber
 + \sqrt{2 \gamma k_B T \mathbf{M}} d\mathbf{W}(t),
\end{align}
where $\gamma > 0$ is the friction coefficient, $k_B$ is Boltzmann's constant, and $\mathbf{W}(t)$ is a standard Wiener process (Brownian motion).
The friction term $-\gamma \mathbf{p}(t)$ dissipates energy while the noise term $\sqrt{2 \gamma k_B T \mathbf{M}} d\mathbf{W}(t)$ injects thermal fluctuations.
An advantage of the Langevin thermostat is that, under ergodicity and weak regularity assumptions on $U(\mathbf{q})$, the dynamics converge to the correct stationary (Boltzmann) distribution $\pi(\mathbf{q}, \mathbf{p}) \propto \exp(-H(\mathbf{q}, \mathbf{p}) / k_B T)$.

\textbf{First- vs. second-order Langevin dynamics.}
Unlike the Langevin dynamics common to the denoising diffusion model literature, the MD-relevant \cref{eq:langevin_dynamics} is \emph{second-order}. The first-order equation is appropriate for generative modeling as it also converges to a Boltzmann distribution in its position space \citep{Särkkä_Solin_2019}, but \cref{eq:langevin_dynamics} is correct for modeling inertia as it includes positions as well as momenta. Relatedly, while the simple first-order Euler-Maruyama scheme is a popular choice for numerically integrating first-order SDEs in generative modeling \citep{song2021scorebased}, it is uncommon in MD due to its poor energy conservation and limited stability \citep{Leimkuhler2013RobustAE}. One of our theoretical contributions compared to prior work such as \citet{de-bortoli25a} is therefore to extend the method of speculative sampling to second-order Langevin numerical integrators.

\textbf{Second-order Langevin numerical integrators.}
We can construct second-order integrators by splitting the combined right-hand sides of \cref{eq:langevin_dynamics}
into three individually solvable pieces: an (A) term driving the positions by the momenta, a (B) term for deterministic momentum updates from the interatomic forces,
and an (O) term for the Ornstein--Uhlenbeck (friction + noise) dynamics on the momenta.
Each subproblem (A), (B), (O) can be integrated analytically for a timestep $\Delta t$ or half-step $\frac{\Delta t}{2}$ (cf. \cref{eq:aboba}) and composing a full step
in different orders
\begin{wrapfigure}{r}{0.32\textwidth}
\vskip -0.1in
\begin{minipage}{\linewidth}
\begin{align}
\label{eq:aboba}
\text{(A)}\quad
\mathbf{q} &\leftarrow \mathbf{q} + \Delta t/2\,\mathbf{M}^{-1}\mathbf{p}, \nonumber \\
\text{(B)}\quad
\mathbf{p} &\leftarrow \mathbf{p} + \Delta t/2\,\mathbf{F}(\mathbf{q}), \nonumber \\
\text{(O)}\quad
\mathbf{p} &\leftarrow e^{-\gamma \Delta t}\,\mathbf{p} + \boldsymbol{\Sigma}^{\frac{1}{2}}\,\boldsymbol{\xi}, \nonumber \\
\text{(B)}\quad
\mathbf{p} &\leftarrow \mathbf{p} + \Delta t/2\,\mathbf{F}(\mathbf{q}), \nonumber \\
\text{(A)}\quad
\mathbf{q} &\leftarrow \mathbf{q} + \Delta t/2\,\mathbf{M}^{-1}\mathbf{p}.\:
\end{align}
\vskip -0.4in
\end{minipage}
\end{wrapfigure}
(e.g., (ABOBA), (BAOAB), (OBABO)) yields splitting schemes with distinct accuracy
and stability properties.
In the deterministic limit (no (O) term), suitable symmetric orderings reduce to standard symplectic integrators \citep{VelocityVerlet}, preserving phase-space structure and energy over long times.
Adding the stochastic (O) term to the natural Hamiltonian decomposition (A) + (B) is a natural extension to Langevin dynamics. Additional steps may include center-of-mass fixation, constraint projections or Monte Carlo moves.
For a simple exposition in \cref{sec:method}, we focus on the (ABOBA) split as our main example, which performs well for configurational averages \citep{Leimkuhler2013RobustAE} and yields the update rules defined in \cref{eq:aboba} with $\boldsymbol{\xi} \leftarrow \mathcal{N}(\mathbf{0},\mathbb{I}), \,
\boldsymbol{\Sigma} = \mathbf{M} k_B T \left( 1 - e^{-2\gamma \Delta t} \right)$.

\subsection{Speculative Sampling}
\label{subsec:speculative_sampling}
\textbf{Molecular dynamics meets speculative sampling.}
Let us now introduce a notation that abstracts away MD specifics and will instead let us focus on the connection to speculative sampling. As $\mathbf{q}$ and $\mathbf{p}$ together make up the state of a second-order Langevin time step, we group them into a vector
\begin{equation}
    x_n \vcentcolon = \left(\mathbf{q}_n, \, \mathbf{p}_n\right) \in \mathbb{R}^{6N} =\vcentcolon \mathcal{X}.
\end{equation}
Integrator schemes like \cref{eq:aboba} effectively define a randomized update $x_{n-1} \mapsto x_{n}$ that samples from a conditional density, $x_n \leftarrow P(\cdot | x_{n-1})$. The Markovianity is specific to our MD setting, but the content of this section applies to general autoregression if we simply swap out $P(\cdot | x_{n-1})$ for $P(\cdot | x_0, \dots, x_{n-1})$ and let $\mathcal{X}$ be any discrete or continuous state space instead of $\mathbb{R}^{6N}$. We call $P(\cdot | x_{n-1})$ the \emph{target model}, to be distinguished from the \emph{target force field} (e.g., an MLIP) that computes $\textbf{F}$ in \cref{eq:aboba}. While getting $\textbf{F}$ (done once per integrator step) dominates the cost of sampling $x_n \leftarrow P(\cdot | x_{n-1})$, the target model really is a combination of both a target force field and an integrator scheme.

\textbf{Fast drafting, parallel verification.}
Running Langevin MD directly with the target model ties our throughput to the rate at which we can sample from $P(\cdot | \cdot)$ step after step. To resolve this bottleneck, speculative sampling shifts the serial workload onto a faster \emph{draft} model $Q(\cdot | \cdot)$ that proposes draft steps $y_n \vcentcolon = \left(\mathbf{\tilde q}_n, \, \mathbf{\tilde p}_n\right) \in \mathbb{R}^{6N} = \vcentcolon \mathcal{Y}$. In the case of MD, we realize $Q(\cdot | \cdot)$ by swapping out the target forces $\mathbf{F}$ for draft forces $\mathbf{\tilde F}$ (obtained from a faster force field) in an integrator scheme such as \cref{eq:aboba}. Crucially, a subsequent verification of draft steps by the target model $P(\cdot | \cdot)$ can now be done on multiple finished draft steps \emph{in parallel}. While most prior works such as \citet{leviathan2023fast} alternate between rolling out a fixed number of $L$ new draft steps (while all the target model instances sit idle) and synchronously applying $P(\cdot | \cdot)$ afterwards to these $L$ steps, LSD pipelines this into an asynchronous algorithm without a pre-specified lookahead integer $L$, see App. \cref{app:pipelining} for details.

\textbf{Probabilistic verification of draft steps.}
After we have materialized $Q(\cdot | y_{n-1})$ and $P(\cdot | y_{n-1})$ on an input draft step $y_{n-1}$, the former at \emph{draft time} and the latter at a subsequent walltime that we call \emph{verification time}, we transport the draft samples $y_n$ onto verified samples $x_n$ by some randomized map. This map must be crafted in a way that transforms steps $y_n$ from $Q(\cdot | y_{n-1})$ onto new steps $x_n$ from $P(\cdot | y_{n-1})$. Many such maps exist, but ours should \emph{also} maximize the probability that $x_n = y_n$. The unfortunate event $x_n \neq y_n$ (a \emph{rejection}) stalls our drafting progress at $x_n$, as all later steps already drafted or verified by the current walltime assume stale inputs. We have no choice but to flush them and restart drafting from the just overridden state $x_n$. On the other hand, doing so guarantees recursively that all kept verified outputs $x_n$ come from a $P(\cdot | x_{n-1})$ where the input $x_{n-1}$ is \emph{also} verified (not flushed). By induction in $k$, the joint distribution $P(x_1, \dots, x_n | x_0) = \prod_{k=1}^n P(x_k | x_{k-1})$ of simulation trajectories starting at a fixed initial condition at $x_0$ until the current step $x_n$ is thus identical to what sequential sampling with the target model would have produced.

\textbf{Coupling and maximality criteria.}
Formally, we define the verification of draft samples as a randomized function
\begin{equation}
\label{eq:verify_map}
\textsc{verify}(\, \cdot \,; \cdot \mid P, Q, n) \colon \mathcal{Y} \times \mathcal{U} \rightarrow \mathcal{X}
\end{equation}
that outputs a simulation state $x_n \in \mathcal{X}$ based on the drafted input state $y_n \in \mathcal{Y}$ and internal randomness $u \in \mathcal{U}$ drawn from a random variable $U$, such as $U \sim \text{Uniform}([0,1])$. If we denote $X_n, Y_n$ for the random variables of which $x_n, y_n$ are drawn, our requirements of the previous paragraph read
\begin{align}
\label{eq:coupling}
&Y_n \sim Q(\cdot | y_{n-1}) \implies
X_n = \textsc{verify}(Y_n; U \mid P, Q, n) \sim P(\cdot | y_{n-1}),\\
\label{eq:maximality}
&\mathbb{P}(X_n = Y_n) \geq \mathbb{P}(\tilde X_n = \tilde Y_n) \quad
\forall \text{ pairs of random variables }(\tilde X_n, \tilde Y_n) \text{ s.t. } \tilde X_n \sim P(\cdot | y_{n-1}), \tilde Y_n \sim Q(\cdot | y_{n-1}).
\end{align}
We call \cref{eq:coupling} the \emph{coupling} and \cref{eq:maximality} the \emph{maximality} property. The objects characterized by \cref{eq:coupling,eq:maximality} are known as \emph{maximal couplings} \citep{lindvall_lectures_2002} and have been linked to speculative sampling by \citet{de-bortoli25a}. We rephrase parts of this section in terms of maximal couplings in App. \cref{app:maximal_couplings}.

\section{Langevin Speculative Dynamics}
\label{sec:method}

\subsection{Verification of MD Steps}
\label{subsec:aboba_step_verification}

At its very core, developing a speculative sampling method for MD comes down to finding a randomized transport map that fulfills the coupling and maximality criteria \cref{eq:coupling,eq:maximality} between MD-specific target and draft distributions $P(\cdot | y_{n-1})$, $Q(\cdot | y_{n-1})$. As laid out in \cref{subsec:speculative_sampling}, our draft steps are $y_{n-1} = \left(\mathbf{\tilde q}_{n-1}, \mathbf{\tilde p}_{n-1}\right)$ and sampling from $P(\cdot | y_{n-1})$ / $Q(\cdot | y_{n-1})$ means stepping through an integrator scheme like (ABOBA) (\cref{eq:aboba}) with forces $\mathbf{F}$ / $\mathbf{\tilde F}$ obtained from a target / draft force field at input $y_{n-1}$.

\textbf{Verification of momentum updates.}
Together, the three middle (BOB) steps of \cref{eq:aboba} with a draft force field $\mathbf{\tilde F}$ constitute a Gaussian draw for the new draft momentum $\mathbf{\tilde p}_n$,
\begin{align}
\label{eq:bob}
&\mathbf{\tilde p}_n \leftarrow \mathcal{N}(\, \cdot \,; \langle \mathbf{\tilde p}_n \rangle, \boldsymbol{\Sigma}), \text{ where }\\
\langle &\mathbf{\tilde p}_n \rangle = e^{-\gamma \Delta t} \mathbf{\tilde p}_{n-1} + \left(1 + e^{-\gamma \Delta t}\right) \; (\Delta t/2) \; \mathbf{\tilde F}_n, \quad \boldsymbol{\Sigma} = \mathbf{M} k_B T \left( 1 - e^{-2\gamma \Delta t} \right).\nonumber
\end{align}
Crucially, when we repeat these steps with the target forces $\mathbf{F}$ at verification time, the \emph{only} difference that arises is in the mean updated momentum $\langle \mathbf{p_n} \rangle \neq \langle \mathbf{\tilde p}_n \rangle$, while the covariance $\boldsymbol{\Sigma}$ does not depend on the force field. Before we turn towards coupling two full integrator steps in $\mathbb{R}^{6N}$, it therefore is helpful to understand how it is done for the middle two $3N$-dimensional (BOB) momentum Gaussians with different means but identical and diagonal covariance matrix $\boldsymbol{\Sigma}$.
We solve this through \emph{reflection-maximal couplings} \citep{Bou_Rabee_2020,de-bortoli25a}, visualized for a single dimension in \cref{fig:reflection-a,fig:reflection-b,fig:reflection-c}. Given the previous, updated and mean updated draft momenta $\mathbf{\tilde p}_{n-1}, \mathbf{\tilde p}_{n}, \langle \mathbf{\tilde p}_{n} \rangle$ (obtained from $\mathbf{\tilde F}$ at drafting time) and the mean updated target momentum $\langle \mathbf{p}_{n} \rangle$ (obtained later from $\mathbf{F}$ at verification time), the idea is to accept draft momenta based on the target/draft likelihood ratio and reflect rejected momenta (Mahalanobis-)normally to the equal-likelihood hyperplane. With $\mathbf{z} \vcentcolon= \boldsymbol{\Sigma}^{-\frac{1}{2}}(\mathbf{\tilde p}_{n} - \langle \mathbf{\tilde p}_{n} \rangle)$ and the normal $\boldsymbol{\delta} \vcentcolon= \boldsymbol{\Sigma}^{-\frac{1}{2}}(\langle \mathbf{\tilde p}_{n} \rangle - \langle \mathbf{p}_{n} \rangle)$, \textsc{reflectionVerify} reads (with $u \leftarrow \text{Uniform}([0,1])$):
\begin{equation}
\label{eq:reflectionVerify}
\mathbf{p}_n = \textsc{reflectionVerify}(\mathbf{\tilde p}_n; u \mid \langle \mathbf{p}_n \rangle, \langle \mathbf{\tilde{p}}_n \rangle)
=
\begin{cases}
\mathbf{\tilde p}_n, \quad \text{if } u \leq \min \left\{1,\frac{\mathcal{N}(\mathbf{\tilde p}_n; \langle \mathbf{p}_n \rangle, \boldsymbol{\Sigma})}{\mathcal{N}(\mathbf{\tilde p}_n; \langle \mathbf{\tilde p}_n \rangle, \boldsymbol{\Sigma})}\right\},\\
\langle \mathbf{p}_n \rangle + \boldsymbol{\Sigma}^{\frac{1}{2}}\bigl(\text{Id} - 2 (\boldsymbol\delta^\top \boldsymbol\delta)^{-1} \boldsymbol\delta \boldsymbol\delta^\top \bigr)  \mathbf{z}, \; \text{else}.
\end{cases}
\end{equation}
As proven by \citet{Bou_Rabee_2020} in a generalized setting, \textsc{reflectionVerify} satisfies coupling and maximality \cref{eq:coupling,eq:maximality} with the (momentum-update) target and draft distributions $P(\cdot | \mathbf{\tilde p}_{n-1}) = \mathcal{N}(\, \cdot \,; \langle \mathbf{p}_n \rangle (\mathbf{\tilde p}_{n-1}), \boldsymbol{\Sigma}), Q(\cdot | \mathbf{\tilde p}_{n-1}) = \mathcal{N}(\, \cdot \,; \langle \mathbf{\tilde p}_n\rangle (\mathbf{\tilde p}_{n-1}), \boldsymbol{\Sigma})$ and a rejection rate $\beta_n = \mathbb{P}(\mathbf{p}_n \neq \mathbf{\tilde p}_n)$ determined by
\begin{equation}
\label{eq:rejection_rate}
    \beta_n = \int{\max\bigl\{0, \mathcal{N}(\mathbf{\tilde p}; \langle \mathbf{p}_n \rangle, \boldsymbol{\Sigma}) - \mathcal{N}(\mathbf{\tilde p}; \langle \mathbf{\tilde p}_n \rangle, \boldsymbol{\Sigma})\bigr\} d\mathbf{\tilde p}}
    = \text{erf}\left(\lVert\boldsymbol\delta\rVert/ \sqrt{8}\right),
\end{equation}
where $\lVert\boldsymbol\delta\rVert = \lVert \boldsymbol{\Sigma}^{-\frac{1}{2}}(\langle \mathbf{\tilde p}_{n} \rangle - \langle \mathbf{p}_{n} \rangle)\rVert$ is the Mahalanobis distance of the updated draft/target momentum means and $\text{erf}$ the error function. In \cref{fig:reflection-c}, $\beta_n$ amounts to the red area.

\begin{wrapfigure}{r}{0.53\textwidth}
  \centering
  \vskip 0.3pt
  \begin{subfigure}{\linewidth}
    \centering
    \includegraphics[width=\linewidth]{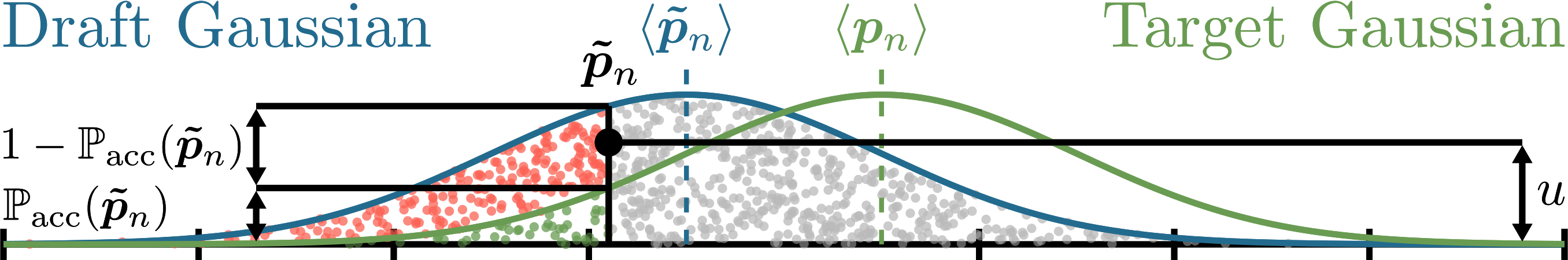}
    \caption{Accept a draft $\mathbf{\tilde p}_n$ with $\mathbb{P}_{\text{acc}}(\mathbf{\tilde p}_n)=\min \left\{1,\frac{\mathcal{N}(\mathbf{\tilde p}_n; \langle \mathbf{p}_n \rangle, \boldsymbol{\Sigma})}{\mathcal{N}(\mathbf{\tilde p}_n; \langle \mathbf{\tilde p}_n \rangle, \boldsymbol{\Sigma})}\right\}$.}
    \label{fig:reflection-a}
  \end{subfigure}
  \vskip 0.02in
  \begin{subfigure}{\linewidth}
    \centering
    \includegraphics[width=\linewidth]{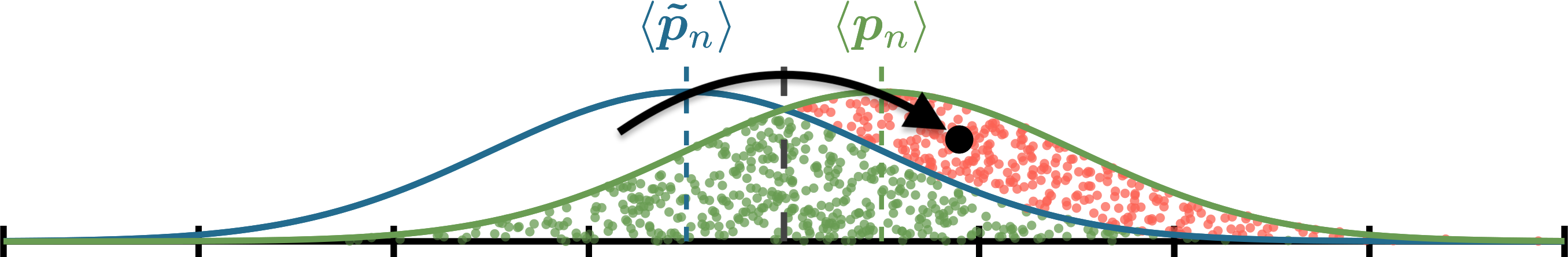}
    \caption{If rejected, reflect $\mathbf{\tilde p}_n$ at the symmetrically-dividing line.}
    \label{fig:reflection-b}
  \end{subfigure}
  \vskip 0.02in
  \begin{subfigure}{\linewidth}
    \centering
    \includegraphics[width=\linewidth]{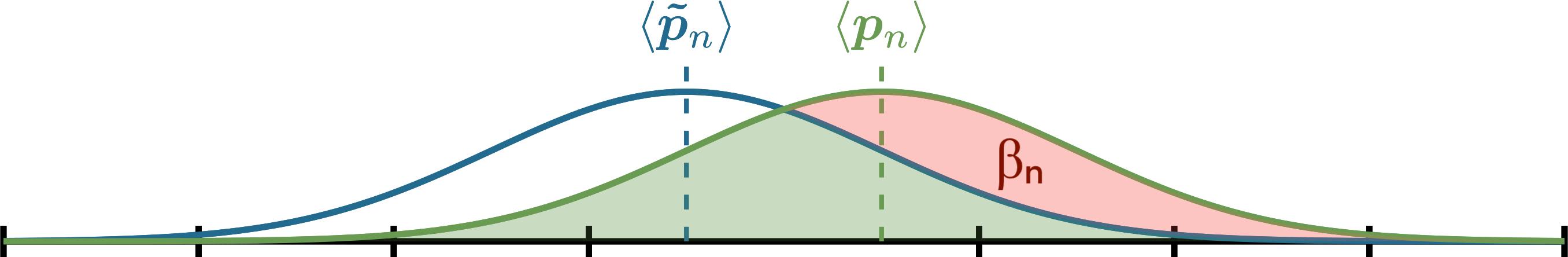}
    \caption{$\beta_n = 1 - \int_{-\infty}^{\infty} \mathbb{P}_\text{acc}(\mathbf{\tilde p})\, \mathcal{N}(\mathbf{\tilde p}; \langle \mathbf{\tilde p}_n \rangle, \boldsymbol{\Sigma}) \, d\mathbf{\tilde p}$ is the red area.}
    \label{fig:reflection-c}
  \end{subfigure}
  \caption{$\textsc{reflectionVerify}$ for two 1D Gaussians.}
  \label{fig:reflection-coupling}
  \vskip -0.1in
\end{wrapfigure}

\textbf{Verification of a full integrator step.}
The (BOB) part of \cref{eq:aboba} does not alter positions, so the map acting as \textsc{reflectionVerify} on the momenta and as the identity on the positions trivially embeds our (BOB) momentum coupling into the full $\mathbb{R}^{6N}$ state space. The question is how we can derive a full (ABOBA) coupling from our known coupling between (BOB) parts. Intuitively, if we have aligned distributions after (BOB), they should remain aligned when we apply (A) to the verified outcomes as (A) does not differ from draft to target. Moreover, even if we knew a coupling that aligns the distributions after (A), no such map can have better acceptance rates, as the invertible post-processing through (A) does not affect overlap between distributions. We formalize and generalize this intuition in the following \cref{thm:thm1}:

\begin{restatable}[Pre-/post-processing]{theorem}{prepost}
\label{thm:thm1}
Assume the target and draft distributions $P, Q$ decompose into
\begin{align*}
&P(\, \cdot \, | y_{n-1}) = g_*P'\left(\, \cdot \, | f(y_{n-1})\right)\\
&Q(\, \cdot \, | y_{n-1}) = g_*Q'\left(\, \cdot \, | f(y_{n-1})\right)
\end{align*}
with functions $f,g$, conditional distributions $P', Q'$, and the pushforward operator $*$.
Let $Y_n' \sim Q'\left(\, \cdot \, | f(y_{n-1})\right)$ record the draft state directly before the pushforward under $g$ was applied.

If a randomized map $\textsc{verify}(\, \cdot \, ; \cdot \mid P',Q',n)$, defined via \cref{eq:verify_map}, has coupling property \cref{eq:coupling} for $P', Q'$, then
\begin{equation*}
    X_n \vcentcolon = g \circ \text{\textsc{verify}} (Y_n'; U \mid P',Q',n)\sim P(\, \cdot \, | y_{n-1}).
\end{equation*}
If $\textsc{verify}(\, \cdot \,; \cdot \mid P', Q', n)$ moreover satisfies maximality (\cref{eq:maximality}) for $P', Q'$ and if $g$ is invertible, then no coupling of $P,Q$ has an acceptance rate higher than $\mathbb{P}(X_n=Y_n)$.
\end{restatable}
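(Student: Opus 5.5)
The argument has two parts: the coupling claim, which follows directly from how pushforwards behave, and the maximality claim, which uses the invertibility of $g$ to move between couplings of $(P,Q)$ and couplings of $(P',Q')$ without changing the acceptance rate. Throughout, we write $z \vcentcolon= f(y_{n-1})$ for the common conditioning input. This input is the same for draft and target because both models are evaluated at the same draft state $y_{n-1}$.

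\textbf{Coupling.} By assumption, $Y_n' \sim Q'(\cdot\,|\,z)$. The coupling property \cref{eq:coupling} of $\textsc{verify}(\cdot;\cdot\mid P',Q',n)$ then gives $X_n' \vcentcolon= \textsc{verify}(Y_n';U\mid P',Q',n) \sim P'(\cdot\,|\,z)$. For any measurable set $A$,
\[
\mathbb{P}(g(X_n')\in A)=\mathbb{P}(X_n'\in g^{-1}(A))=P'(g^{-1}(A)\,|\,z)=\bigl(g_*P'(\cdot\,|\,z)\bigr)(A)=P(A\,|\,y_{n-1}).
\]
Hence $X_n = g(X_n') \sim P(\cdot\,|\,y_{n-1})$. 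The same argument gives $Y_n \vcentcolon= g(Y_n') \sim Q(\cdot\,|\,y_{n-1})$, so $(X_n, Y_n)$ is a coupling of $P$ and $Q$. This part needs no invertibility of $g$.

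\textbf{Maximality.} Assume $g$ is invertible, with a measurable inverse; we take this to be part of what ``invertible'' means here. Let $(\tilde X_n,\tilde Y_n)$ be any coupling of $P(\cdot\,|\,y_{n-1})$ and $Q(\cdot\,|\,y_{n-1})$. Define $\tilde X_n' \vcentcolon= g^{-1}(\tilde X_n)$ and $\tilde Y_n' \vcentcolon= g^{-1}(\tilde Y_n)$.

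\begin{itemize}
\item \emph{Marginals pull back.} Since $P = g_*P'$ and $g$ is a bijection, $(g^{-1})_* g_* P' = P'$. Therefore $\tilde X_n' \sim P'(\cdot\,|\,z)$, and likewise $\tilde Y_n' \sim Q'(\cdot\,|\,z)$.
\item \emph{Acceptance events match.} Injectivity of $g$ gives $\{\tilde X_n = \tilde Y_n\} = \{\tilde X_n' = \tilde Y_n'\}$, and in the same way $\{X_n = Y_n\} = \{X_n' = Y_n'\}$.
\end{itemize}

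Maximality \cref{eq:maximality} of $\textsc{verify}$ for $(P',Q')$ now yields the chain
\[
\mathbb{P}(X_n=Y_n)=\mathbb{P}(X_n'=Y_n')\ \ge\ \mathbb{P}(\tilde X_n'=\tilde Y_n')=\mathbb{P}(\tilde X_n=\tilde Y_n).
\]
So no coupling of $P$ and $Q$ has a higher acceptance rate than $\mathbb{P}(X_n=Y_n)$. Here $Y_n$ is understood as $g(Y_n')$, which is the actual draft state.

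\textbf{Main obstacle.} The only delicate point is measure-theoretic. We need $g^{-1}$ to be measurable, so that pulling a coupling back through $g^{-1}$ is legitimate, and we need the diagonal events $\{x = y\}$ to be measurable so that the acceptance probabilities are well defined. In the intended application, $g$ is the (A) half-step, which is an affine bijection of $\mathbb{R}^{6N}$. Both conditions then hold automatically, and we expect to state them as standing assumptions rather than prove them in general.
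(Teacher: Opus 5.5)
Your proof is correct and takes essentially the same route as the paper. The coupling part is the pushforward identity $g_*P' = P$. The maximality part pulls a competing coupling of $P,Q$ back through $g^{-1}$, notes that the acceptance event is preserved, and invokes maximality of $\textsc{verify}$ for $(P',Q')$. The only differences are cosmetic: the paper argues by contradiction, using a competing randomized map conjugated as $g^{-1}\circ\textsc{verify}'\circ g$ and one-sided event implications, whereas you argue directly over arbitrary couplings $(\tilde X_n,\tilde Y_n)$ with event equalities, which matches the quantifier in \cref{eq:maximality} slightly more literally.
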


\begin{wrapfigure}{l}{0.3\textwidth}
  \centering
  \begin{tikzpicture}[>=latex, node distance=1.4cm,
      every node/.style={inner sep=2pt},
      graynode/.style={text=diagramgray}]
    \node[graynode] (Y0) {$Y_{n-1}$};
    \node[graynode, right of=Y0] (mid1) {$\bullet$};
    \node[graynode, right of=mid1] (Yp) {$Y'_n$};
    \node[graynode, right of=Yp] (Yn) {$Y_n$};
    \node[graynode, below of=Yp, node distance=1.0cm] (mid2) {$\bullet$};
    \node[graynode, right of=mid2] (Xn) {$X_n$};
    \draw[->, eccolor] (Y0) -- node[above, font=\scriptsize] {$f$} (mid1);
    \draw[->, diagramgreen, dashed] ([yshift=-2pt]Y0.east) -- ([yshift=-2pt]mid1.west);
    \draw[->, eccolor, dashed] (Y0) to[bend left=30] node[midway, above, font=\scriptsize] {$Q$} (Yn);
    \draw[->, diagramgreen, dashed] ([yshift=-4pt]Y0.south east) to[bend right=30] node[midway, below, font=\scriptsize] {$P$} ([yshift=-4pt]Xn.west);
    \draw[->, eccolor] (mid1) -- node[above, font=\scriptsize] {$Q'$} (Yp);
    \draw[->, diagramgreen, dashed] (mid1) -- node[left, font=\scriptsize] {$P'$} (mid2);
    \draw[->, eccolor] (Yp) -- node[above, font=\scriptsize] {$g$} (Yn);
    \draw[->, diagramgreen] (Yp) -- node[right, font=\scriptsize] {\textsc{verify}} (mid2);
    \draw[->, diagramgreen] (mid2) -- node[above, font=\scriptsize] {$g$} (Xn);
  \end{tikzpicture}
  \caption{Transition kernel diagram illustrating \cref{thm:thm1}. Fully tracing the blue vs. green paths samples a step from the draft vs. target distribution. LSD-sampled paths in solid.}
  \vskip -0.5pt
\end{wrapfigure}

As a direct consequence of \cref{thm:thm1}, applying \textsc{reflectionVerify} to the draft momenta after the (BOB) step and reapplying (A) to the verified outputs couples the draft and target (ABOBA) steps and has the optimal acceptance rate, as \textsc{reflectionVerify} satisfies maximality and (A) is invertible.
We prove \cref{thm:thm1} in App. \cref{app:thm1proof}. Unlike (ABOBA), some integrator schemes may include non-invertible post-processing steps such as center-of-mass fixation or constraint projections. Through \cref{thm:thm1}, the coupling property still applies to such schemes as long as all force-dependent steps can be grouped into a contiguous sampling block for which a coupling map is known. We generalize our findings to a different integrator scheme, (OBABO), in App. \cref{app:obabo}. Moreover, we provide the full pipelined LSD algorithm and an end-to-end (i.e. algorithm-level) proof of correctly sampled outputs regardless of potential out-of-order target model returns in App. \cref{app:pipelining}.

\subsection{Speedup and Optimal Resource Bounds}
\label{subsec:bounds}
After establishing a coupling between ABOBA steps in \cref{subsec:aboba_step_verification}, we now turn to algorithm-level characteristics.

\textbf{Optimization through pipelining.}
To achieve optimal performance in practice, we pursue a pipelined multi-device solution instead of the synchronous \cref{alg:speculative_sampling_non_pipelined} used in seminal language works like \citet{leviathan2023fast}. Our target models run on their own devices and act as an asynchronous pool that verifies draft steps by the next-idle model. A verifier protocol aggregates target model outputs and immediately rolls back the draft after a rejection to prevent stale draft inputs (cf. \cref{fig:fig1}). The pipelined variant is also simpler to analyze as it no longer depends on a lookup integer $L$ like \cref{alg:speculative_sampling_non_pipelined}. We discuss our detailed reasons for choosing a pipelined approach and prove its correctness in App. \cref{app:pipelining}.

\textbf{Walltime speedup analysis.}
We now derive an implementation-agnostic upper bound on the LSD speedup. It depends on only two inputs: an upper rejection bound for all steps, and the draft/target cost fraction. We discuss in App. \cref{app:pipelining} how this simplifies and improves the non-pipelined bound of \citep{de-bortoli25a}.
\begin{proposition}[Upper speedup bound]
\label{prop:speedup}
Let $r > 0$ lower-bound the rejection rates for all draft steps and let $c > 0$ be the draft-to-target cost ratio (assumed constant). For i.i.d. rejections, the expected speedup is upper-bounded by $\frac{1}{c + r}$.
\end{proposition}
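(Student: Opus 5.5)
I will model LSD as a renewal process over \emph{segments}. A segment starts when drafting begins from a verified state and ends with the first rejected step, after which the draft is rolled back. Normalize the cost of one target model call to $1$, so one draft step costs $c$. Sequential target sampling then produces one step per unit walltime, and the speedup is the expected number of verified steps produced per unit walltime. By the renewal-reward theorem, this long-run rate equals $\mathbb{E}[\text{steps per segment}]/\mathbb{E}[\text{walltime per segment}]$. The plan is to lower-bound the denominator as a function of the segment length and then apply Jensen's inequality.

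\textbf{Step 1: segment length.} With i.i.d. rejections, let $K$ be the index of the first rejected step within a segment. If every step is rejected with probability exactly $r$, then $K$ is geometric with $\mathbb{E}[K]=1/r$. If per-step rejection probabilities $\beta_n \ge r$ are allowed, $K$ is stochastically dominated by a geometric random variable, so $\mathbb{E}[K]\le 1/r$. Each segment yields exactly $K$ verified outputs: the $K-1$ accepted draft steps plus the overriding state $x_K$.

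\textbf{Step 2: walltime per segment.} The draft model is serial, so producing the $K$ draft steps up to and including the rejected one takes at least $Kc$ walltime. Step $K$ cannot be verified until draft step $K-1$, its input, exists. Verifying it then takes one full target call of duration $1$, and that call cannot start before draft step $K-1$ is finished. Drafting cannot resume past $x_K$ until this verification returns, because the next segment starts from $x_K$. The segment walltime is therefore at least $(K-1)c + 1 \ge Kc + (1-c)$. I will use the cleaner bound
\[
T_{\text{seg}} \ge (K-1)c + 1 .
\]
The speedup is then at most $\mathbb{E}[K]/\mathbb{E}[(K-1)c+1]$. Since $t \mapsto t/((t-1)c+1)$ is increasing in $t$ whenever $c<1$, substituting $\mathbb{E}[K]\le 1/r$ gives the bound
\[
\frac{1/r}{(1/r-1)c+1} = \frac{1}{c + r(1-c)}.
\]

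\textbf{Step 3: reconcile with $1/(c+r)$.} The bound from Step 2 is $1/(c+r(1-c))$, which is slightly larger than the stated $1/(c+r)$. Reaching $1/(c+r)$ requires charging each segment a cost of $Kc+1$: $K$ draft steps, including drafting the rejected step itself, plus one target latency. This accounting holds if the rejected draft step must be fully drafted before its verification can begin, and the verification then runs to completion before the redraft. That matches the verification order of the pipelined algorithm in App. \cref{app:pipelining}, where verification of step $n$ is dispatched once the draft step $y_n$ exists.

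The main obstacle is making this walltime lower bound rigorous for the asynchronous pipeline, whatever the number of target devices and even with out-of-order returns. I would argue via a critical-path bound on the dependency chain draft$_1\to\cdots\to$draft$_K\to$verify$_K\to$redraft. With the segment cost $Kc+1$ established, the ratio becomes $\mathbb{E}[K]/(c\,\mathbb{E}[K]+1) = 1/(c+1/\mathbb{E}[K])$. This is increasing in $\mathbb{E}[K]$, so $\mathbb{E}[K]\le 1/r$ yields the claimed upper bound $1/(c+r)$.
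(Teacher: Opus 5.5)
Your argument is correct, but it reaches the bound by a somewhat different route than the paper.

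\textbf{The paper's route.} The paper takes the per-streak cost $L\,\Delta t_{\text{fast}} + \Delta t_{\text{slow}}$ as exact. It defines the expected speedup as the expectation of the per-streak ratio, $\mathbb{E}[f(L)]$ with $f(x) = 1/(c + x^{-1})$. It then needs Jensen's inequality (concavity of $f$) to pass to $f(\mathbb{E}[L])$, before using $\mathbb{E}[L] \le 1/r$ and monotonicity of $f$. This buys brevity and avoids any long-run limit argument.

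\textbf{Your route.} You instead define the speedup as long-run throughput, i.e.\ a ratio of expectations. Because the segment cost $Kc+1$ is affine in $K$, that ratio is exactly $f(\mathbb{E}[K])$, so no Jensen step is needed. Since $\mathbb{E}[f(K)] \le f(\mathbb{E}[K])$, your bound controls the larger and operationally more relevant quantity, and therefore implies the paper's. Treating $T_{\text{seg}} \ge Kc+1$ as a critical-path lower bound rather than an exact cost is also more robust. Blocking on busy target devices, flushed verifications still occupying devices, or communication overhead can only strengthen the inequality. Strictly, you do not even need renewal structure for the walltimes, which are not exactly i.i.d.\ for that reason. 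The pointwise bound $T_{\text{seg},i} \ge cK_i + 1$ together with the strong law for the i.i.d.\ $K_i$ suffices.

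\textbf{What your Step 2 exposes.} The target force call for step $K$ only needs $f(y_{K-1})$. So the bound $1/(c+r)$ relies on verification being dispatched only after $y_K$ has been drafted, as in the pipelined algorithm; the paper leaves this implicit. An implementation that overlaps the target force call with drafting step $K$ would have segment cost $(K-1)c+1$. It could then reach $1/(c + r(1-c)) > 1/(c+r)$ for $0 < c < 1$. The proposition is therefore tied to that dispatch order rather than being fully implementation-agnostic.
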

\begin{proof}
The LSD walltime cost of a single length-$L$ accept streak including the rejection (\cref{fig:fig1}) is $L \Delta t_\text{fast} + \Delta t_\text{slow}$ (due to the overhead of verifying the rejected step), whereas the serial target model cost is $L \Delta t_\text{slow}$. In expectation, \[\mathbb{E}\left(\frac{L \Delta t_\text{slow}}{L \Delta t_\text{fast} + \Delta t_\text{slow}}\right) = \mathbb{E}\left(\frac{1}{c + L^{-1}}\right) \leq \frac{1}{c+\mathbb{E}(L)^{-1}},\]
where we used Jensen's inequality with a concave $f(x) = 1/(c+x^{-1})$ given $c>0$ in the last step. By assumption, we can lower-bound $\mathbb{E}(L)$ by a geometric expectation, $\mathbb{E}(L) \leq \sum_{l=1}^{\infty} (1-r)^{l-1} r l = \frac{1}{r}$, which implies the statement.
\end{proof}

\textbf{Optimal resource allocation.}
Prop. \ref{prop:speedup} assumes enough target model devices exist to always serve new draft steps immediately. Assuming that a single target model offers a stable throughput of $\mu$ steps / second, a pool of $N_T$ such models can serve draft steps at a combined rate $N_T \mu$. By conservation of flow, a draft throughput of $\lambda$ requires $N_T \geq \lceil\frac{\lambda}{\mu}\rceil = \lceil\frac{1}{c}\rceil$ target models to avoid waiting times. As steps cannot be served faster than their rate of arrival, allocating even more target models just adds an average of $N_T - \frac{1}{c}$ idle target models. Mild overallocation can still be a helpful buffer against fluctuations in true arrival and service rates.

\subsection{Prediction of the Speedup and Rejection Rate}
\cref{prop:speedup} in itself makes no direct speedup predictions yet as its upper rejection bound $r$ is undetermined. We also still need to understand how rejection rates depend on different simulation sizes, temperatures, frictions, and MD timesteps. We find an insightful analysis under two simplifications:
\begin{restatable}{modellingassumption}{simpone}
\label{simp:simplification1}
Rejections are i.i.d. events happening at a constant, ``time-averaged rate'' $0 \leq \langle \beta \rangle \leq 1$ for all steps.
\end{restatable}
\begin{restatable}{modellingassumption}{simptwo}
\label{simp:simplification2}
    In a simulation box repeating a group of equal atoms $G \propto N$ times, every group adds an equal and $G$-independent amount to the total draft-target error.
\end{restatable}
\cref{simp:simplification1} imposes homogeneity in time and helps us derive an empirical speedup formula, while \cref{simp:simplification2}, amounting to homogeneity in space, lets us model the dependence of force errors (and therefore rejection rates) on system size.

\textbf{Empirical speedup formula.}
Under \cref{simp:simplification1}, already central to the seminal analysis of \citet{leviathan2023fast}, the unknown bound $r$ in \cref{prop:speedup} instead becomes the constant rate $\langle \beta \rangle$ that can be consistently estimated from data as the ratio $\frac{R(K)}{K} \rightarrow \langle \beta \rangle$ of rejected vs. total steps for $K \rightarrow \infty$. The result is a testable formula,
\begin{equation}
\label{eq:empirical_speedup}
    \text{speedup} \lesssim \frac{1}{c + \langle \beta \rangle}
\end{equation}
where the inequality is due to added overheads (e.g., communication) not modeled in the bound of \cref{prop:speedup}.

\paragraph{Speedup formula interpretation.}
\cref{eq:empirical_speedup} has a compellingly simple structure: the cost fraction and mean rejection rate, which represent the Pareto tradeoff between draft throughput and accuracy, have symmetric effects. Improving one characteristic far beyond the other shows diminishing returns as speedups remain dominated by the other, larger parameter. Therefore, optimizing draft-target combinations should strike a balance between both factors. We embed our speedup measurements in \cref{sec:experiments}, \cref{fig:pareto} in a contour plot of this expected theoretical landscape, confirming the tightness of \cref{eq:empirical_speedup} for a majority of cases.

\textbf{Modeling of the rejection rate.}
In \cref{app:scaling_formula}, we use \cref{simp:simplification1,simp:simplification2} to derive a semi-empirical relationship for the mean rejection rate $\langle \beta \rangle$ depending on the atom count $N$, temperature $T$, timestep $\Delta t$ and friction timescale $\tau = \frac{1}{\gamma}$:
\begin{equation}
\label{eq:scaling_law}
    \exists \, \varepsilon > 0 \colon \langle \beta \rangle(N, \tau, \Delta t, T) \approx \text{erf}\left(( N \tau \Delta t)^{\frac{1}{2}}T^{-\frac{1}{2}}\varepsilon\right)
\end{equation}
where the empirical constant $\varepsilon$ captures the draft-target error per atom and is \emph{independent} of $N, \tau, \Delta t$ and $T$. In practice, if one fixes the draft and target model (and therefore $\varepsilon$), measuring a rate $\langle \beta \rangle_*$ once for an input choice $(N, \tau, \Delta t, T)_*$ allows inverting \cref{eq:scaling_law} to estimate $\varepsilon$ and extrapolate $\langle \beta \rangle$ to all other choices of $(N, \tau, \Delta t, T)$. Intuitively,
\begin{itemize}
    \item higher temperatures $T$ / larger frictions $\gamma$ (or smaller $\tau = \gamma^{-1}$) increase the covariance $\Sigma$, essentially hiding the error between the draft/target Gaussians in \cref{fig:reflection-c} under a larger blanket of thermal noise,
    \item longer timesteps $\Delta t$ increase Gaussian distance $\lVert \delta \rVert$ at fixed $\Sigma$, making the Gaussians more distinguishable,
    \item higher atom count $N$ increases Gaussian dimension, increasing $\lVert \boldsymbol\delta \rVert$ if per-atom errors $\lVert \boldsymbol\delta \rVert / \sqrt{N}$ are fixed.
\end{itemize}
We emphasize that \cref{simp:simplification1,simp:simplification2} reflect strong approximations about the system and therefore think of \cref{eq:scaling_law} primarily as an informative model. While it is in fact highly accurate for the copper system of \cref{sec:experiments}, \cref{fig:scaling_law_fig}, complicating factors like spatial / temporal inhomogeneity or long-range interactions may impact its quantitative accuracy.

\textbf{Speculative error correction (EC).}
In regimes where \cref{eq:scaling_law} dictates high rejection rates, the obvious measure is to align the draft and target, i.e., lower the constant $\varepsilon$. One way to do this is by learning from past errors. Let $\mathbf{\tilde F}_n$ be the bare force output of the draft model at the current step $n$, and $\mathbf{F}_{n-k}, \, \mathbf{\tilde F}_{n-k}$ be the bare draft and target outputs at the most recent verified step $n-k$. If the error $\Delta \mathbf{F}_{n-k} = \mathbf{F}_{n-k} - \mathbf{\tilde F}_{n-k}$ evolves more slowly than the two forces, we can approximate $\mathbf{F}_n \approx \mathbf{\tilde F}_n + \Delta \mathbf{F}_{n-k}$, essentially speculating that the known past error approximates the current error. EC substitutes bare draft outputs $\mathbf{\tilde F}_n$ by this expression and also caches the raw outputs $\mathbf{\tilde F}_n$ for future comparison to keep track of the most recent historical error. We provide a more detailed, algorithm-level discussion of EC in App. \cref{app:ec} and explore another related strategy to improve rejection rates in App. \cref{app:force_slack}.

\section{Related Work}
\textbf{Speculative sampling and related methods.}
Speculative sampling in its probabilistic form was concurrently introduced for language model decoding by \citet{leviathan2023fast,chen2023accelerating}, and later generalized to image denoising diffusion models by \citet{de-bortoli25a}. However, it is predated by conceptually similar approaches in the Hamiltonian Monte Carlo (HMC) community, including the introduction of reflection-maximal couplings by \citet{Bou_Rabee_2020} which crucially underlie this work, as well as their use by \citet{de-bortoli25a}. For molecular simulations, a close relative of LSD is hybrid Monte Carlo \citep{duane1987hmc}, which accepts or rejects steps via an energy-based Metropolis-Hastings criterion and has been used to correct MLIP proposals via concurrently running DFT evaluations by \citet{nagai2020hmc}. Unlike LSD, the acceptance criterion of HMC-based methods requires an explicit energy function (Hamiltonian), whereas LSD operates purely on forces and therefore readily admits a non-conservative draft model. The guarantees also differ: LSD recovers a target product distribution of integrator steps, while HMC ensures asymptotically exact Boltzmann sampling.

\textbf{Accelerating MLIP driven MD.}
Many recent works have proposed methods to accelerate MLIP driven MD with similar motivations as ours. Example strategies include predicting large segments of a trajectory at once through large timesteps \citep{bigi2025flashmd, timewarp_2023, ito_2025}, reusing embeddings from previous timesteps \citep{schaaf2024boostmd}, distillation of faster models \citep{amin2025towards,cattin_accelerating_2026}, or using a hierarchy of interaction cutoffs \citep{fu2023learning}, which is closely related multiple time-step integration \citep{tuckerman_reversible_1992}. A common issue with these methods is that they are typically lossy and may rely on additional hyperparameters, whereas LSD provides guarantees to match the target model trajectory distribution. Additionally, some of these techniques could be used as a fast draft model in an LSD sampler that is then verified by the target model.

\section{Experiments}
\label{sec:experiments}
\subsection{Predicted and Real Rejection Rates}
\label{subsec:rejection_rates}
\begin{wrapfigure}{r}{0.5\textwidth}
  \centering
  \vskip -0.4in
  \includegraphics[width=\linewidth]{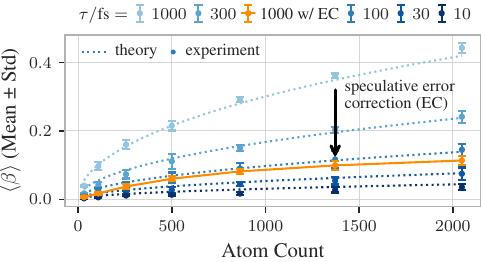}
  \caption{The mean Orb-v3-direct + UMA-S rejection rates for FCC copper at varying atom count $N$ and friction timescale $\tau$ agree strongly with \cref{eq:scaling_law} fitted to $(N = 500, \tau = 1\text{ps})$. EC makes LSD viable at $\tau = 1\text{ps}$ by strongly reducing rejections.}
  \label{fig:scaling_law_fig}
  \vskip 0.2in
  \includegraphics[width=\linewidth]{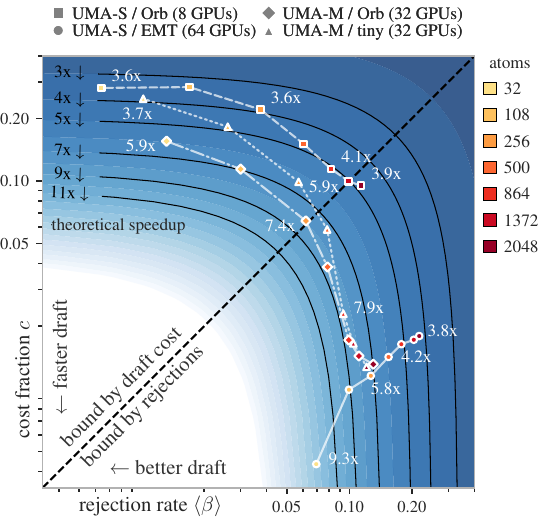}
  \caption{Each marker represents cost fractions $c$ and mean rejection rates $\langle \beta \rangle$ for a draft/target combination on FCC copper at varying atom count $N$. True speedups are upper-bounded by \cref{eq:empirical_speedup} which is \emph{only} a function of $(c, \langle \beta \rangle)$ and plotted as the contour landscape, as opposed to the true speedups (white labels) which also depend on other inputs, e.g. communication overhead. We orient GPU counts at the throughput optimum from \cref{subsec:bounds}.}
  \label{fig:pareto}
  \vskip -0.6in
\end{wrapfigure}
The achievable speedup in \cref{eq:empirical_speedup} crucially depends on rejection rates, so we validate our model of their dependence on system and simulation parameters. In (\cref{fig:scaling_law_fig}), our benchmark system is FCC copper with sizes of $32 \leq N \leq 2048$ atoms at temperature $T=1500K$ and timestep $\Delta t = 1\text{fs}$. The draft is \texttt{Orb-v3-direct-20-omat} \citep{rhodes2025orbv3}, one of the most lightweight pretrained MLIPs at the time of writing. Our target is \texttt{UMA-S} \citep{wood2025uma}, a universal MLIP with excellent accuracy and generalization.

\textbf{Agreement with theory.}
In \cref{fig:scaling_law_fig}, we plot the mean rejection rates $\langle \beta \rangle$, estimated from ratios $\frac{R(K)}{K}$ of rejected vs. total time steps in trajectory segments of length $K=1000$, at varying atom counts $N$ and friction timescales $\tau$.
To assess the match with theory, we invert \cref{eq:scaling_law} on pair of parameters ($N = 500, \tau = 1\text{ps}$) to estimate the draft/target-associated constant $\varepsilon$. Using \cref{eq:scaling_law} with this $\varepsilon$ accurately models $\langle \beta \rangle$ for other values of ($N$, $\tau$).

\textbf{Improvement from speculative error correction.}
We now consider the highest friction timescale of $\tau = 1\text{ps}$. Running LSD naively at this setting quickly degrades $\langle \beta \rangle$ with growing atom counts above practical values. However, EC combats this by lowering $\langle \beta \rangle$ far enough to make $\tau=1\text{ps}$ LSD simulations practically viable. We further explore the impact of EC on rejection rates in App. \cref{app:rejection_rate_scaling_extended}.

\def\sqPDF#1#2{0 0 m #1 0 l #1 #1 l 0 #1 l h}
\def\trianPDF#1#2{0 0 m #1 0 l #2 4.5 l h}
\def\uptrianPDF#1#2{#2 0 m #1 4.5 l 0 4.5 l h}
\def\circPDF#1#2{#1 0 0 #1 #2 #2 cm .1 w .5 0 m
   .5 .276 .276 .5 0 .5 c -.276 .5 -.5 .276 -.5 0 c
   -.5 -.276 -.276 -.5 0 -.5 c .276 -.5 .5 -.276 .5 0 c h}
\def\diaPDF#1#2{#2 0 m #1 #2 l #2 #1 l 0 #2 l h}

\def\credCOLOR   {.54 .14 0}
\def\cblueCOLOR  {.06 .3 .54}
\def\cgreenCOLOR {0 .54 0}
\def\cblackCOLOR {0 0 0}
\def\cgrayCOLOR {.6 .6 .6}

\def\genbox#1#2#3#4#5#6{%
    \leavevmode\raise#4bp\hbox to#5bp{\vrule height#5bp depth0bp width0bp
    \pdfliteral{q .5 w \csname #2COLOR\endcsname\space RG
                       \csname #3PDF\endcsname{#5}{#6} S Q
             \ifx1#1 q \csname #2COLOR\endcsname\space rg
                       \csname #3PDF\endcsname{#5}{#6} f Q\fi}\hss}}

\def\sqbox      #1#2{\genbox{#1}{#2}  {sq}       {0}   {4.5}  {2.25}}
\def\trianbox   #1#2{\genbox{#1}{#2}  {trian}    {0}   {5}    {2.5}}
\def\uptrianbox #1#2{\genbox{#1}{#2}  {uptrian}  {0}   {5}    {2.5}}
\def\circbox    #1#2{\genbox{#1}{#2}  {circ}     {0}   {5}    {2.5}}
\def\diabox     #1#2{\genbox{#1}{#2}  {dia}      {-.5} {6}    {3}}

\subsection{Predicted and Real Speedups}

We now study predicted vs. real speedups of different draft-target combinations on FCC copper ($T=1500K$, $\tau=1\text{ps}$) and how \cref{eq:empirical_speedup} informs the tradeoffs in picking a draft.

\textbf{Draft and target models.}
As target models, we use two MLIPs: \texttt{UMA-S} from \cref{subsec:rejection_rates} as well as the costlier \texttt{UMA-M}. As LSD works with any draft (not just MLIPs), we test effective medium theory (\texttt{EMT}), a fast classical force field for metals, as well as two draft MLIPs: \texttt{Orb-v3-direct-20-omat} and the slightly costlier, but more accurate \texttt{UMA-tiny-direct} adapted from \citet{fu2025esen} (see \cref{app:model_inference_settings}).

\textbf{Pareto analysis.}
\label{subsec:pareto}
In \cref{fig:pareto}, we compare the theoretical speedup predicted by \cref{eq:empirical_speedup} (black contours) as a function of draft / target cost fractions $c$ and mean rejection rates $\langle \beta \rangle$ to the measured true speedup (white text labels). As expected, \cref{eq:empirical_speedup} is an upper bound. It tends to be tighter with the slow \texttt{UMA-M} target, relative to which the LSD overhead is small, while the gap to theory is largest for the \texttt{UMA-S} / \texttt{EMT} combination, which is fastest in absolute throughput. Two antagonistic trends are clear from \cref{fig:pareto}:
\begin{itemize}
\item \textbf{Rejections increase with atom count.} This matches the theory from \cref{sec:method} along with the prior subsection.
\item \textbf{Draft MLIP cost fractions decrease with atom count.} Perhaps less obviously, this is since smaller draft MLIPs benefit longer from near-flat initial scaling (ideal parallelism) than large target MLIPs on the same GPU. While both models eventually slow down, the onset is earlier for the target model.
\end{itemize}
The winning trend determines if speedups tend to decrease or, perhaps counterintuitively, increase with system size as observed with MLIP drafts in \cref{fig:pareto}.
The second effect is associated with draft models leveraging GPU parallelism and does not apply to the CPU-based \texttt{EMT} model.

\textbf{Regime crossover.}
We find it interesting to note that the two MLIP combinations involving the slow \texttt{UMA-M} target model cross from a firmly cost- into a rejection-bound regime, making the decision whether to focus on a fast or accurate draft model entirely dependent on the simulated atom count.

\begin{wrapfigure}{r}{0.4\textwidth}
  \centering
  \vskip -0.17in
  \includegraphics[width=\linewidth]{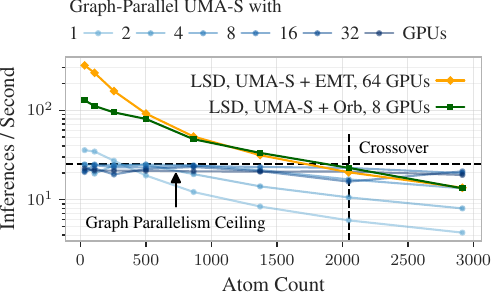}
  \caption{LSD and graph parallelism throughput vs. atom count. The Graph Parallelism (GP) ceiling is caused by a fixed compute overhead (~20ms) specific to \texttt{UMA-S}. For small atom count, the GP speedup is compromised by the ceiling and LSD shows much better performance. GP outperforms LSD at larger atom counts.}
  \label{fig:gp}
  \vskip -0.2in
\end{wrapfigure}

\textbf{Comparison to graph parallelism.}
An obvious question is how much we can gain from LSD compared to spatial domain decomposition approaches like graph parallelism (GP) \citep{sriram2022towards}. In order for GP to be effective, one needs to overcome distributed communication bottlenecks (especially for message-passing networks) and fixed costs (see App. \cref{app:gp_ceiling}) to achieve strong scaling characteristics. This can be very challenging and require model-specific optimizations for many MLIP architectures \cite{Park_2024}. LSD is insensitive to such target model internals as it parallelizes target model calls \emph{end-to-end, in time}. However, the maximum LSD speedup is fundamentally capped (\cref{eq:empirical_speedup}) by the rejection rate that increases with growing system size. This leads to a crossover, illustrated for \texttt{UMA-S} in \cref{fig:gp}: LSD performs an order of magnitude better than graph parallelism for small $N$ and trails GP for large $N$. Both methods can be combined to add concurrency over time and space.

\subsection{Preservation of Target Distribution}
A core advantage of using LSD as a speedup technique is the preservation of the target model distribution of trajectories. In principle (and if no force slack is employed), this is mathematically guaranteed by \cref{thm:thm1}, but we test it in multiple experiments to assert the correctness of LSD, including further parity tests in App. \cref{app:rdf_water,app:ramachandran_lsd}.

\textbf{Non-conservative heating for bulk water.}
Due to its guarantees, we can use LSD to correct a non-conservative 
(direct-force) draft model by an energy-conserving target model. Like \citet{bigi2025dark}, we find that non-conservative MLIPs cause excess heating and test if LSD can correct the problem. To this aim, we simulate
\begin{wraptable}{l}{0.5\textwidth}
  \vskip -0.1in
  \caption{Mean excess heating for a conservative target model, a non-conservative draft model and their LSD combination.}
  \label{tab:heating}
  \centering
  \begingroup
  \setlength{\tabcolsep}{4.0pt}
  \begin{tabular}{lllll}
  \multicolumn{1}{l|}{Force Field} & UMA-S & LSD &  UMA-tiny-direct &  \\ \cline{1-4}
  \multicolumn{1}{l|}{$\langle T_{\text{kin}}\rangle - 300K$} & $1.0_{\pm 0.9}$ & $1.1_{\pm 0.8}$ & $42.8 _{\pm 0.7} $ &  \\
  \end{tabular}
  \endgroup
  \vskip -0.1in
\end{wraptable}
$250\text{ps}$ trajectories of bulk water ($\tau = 1\text{ps}$ and $N=300$) and record the kinetic temperature $T_\text{kin} = \frac{2 K}{N k_B}$, a quantity proportional to the kinetic energy $K$. While it fluctuates due to Langevin noise, its average over a long trajectory should match our thermostat setting of $T=300K$ \emph{if the model conserves energy}. In \cref{tab:heating} we list the mean excess kinetic temperatures $\langle T_\text{kin} \rangle - 300K$ obtained for the conservative \texttt{UMA-S}, non-conservative \texttt{UMA-tiny-direct-omol} as well as the LSD combination of both models.
As expected, \texttt{UMA-S} shows almost no mean excess heating (except for a residue that we attribute to the numerical integrator), while \texttt{UMA-tiny-direct} misses the target temperature by more than $40 \text{K}$. Consistently with its guarantees, LSD corrects this effect entirely.

\textbf{Lithium diffusion in LGPS.}
To assert target model parity for a kinetic observable, we check if LSD reproduces target statistics for Lithium diffusion in the $\text{Li}_{10}\text{Ge}\text{P}_2\text{S}_{12}$ (LGPS)
superionic conductor \citep{kamaya_lithium_2011}.
We compute $\text{Li}$ self-diffusivities,
$\lim_{t \rightarrow \infty }\frac{\text{MSD}(t)}{6t}$, by Bayesian linear regression of Lithium mean squared displacements of a trajectory against time \citep{mccluskey_2024_kinisi} for \texttt{UMA-S},
\begin{wrapfigure}{l}{0.25\textwidth}
  \centering
  \begin{subfigure}{\linewidth}
    \centering
    \includegraphics[width=\linewidth]{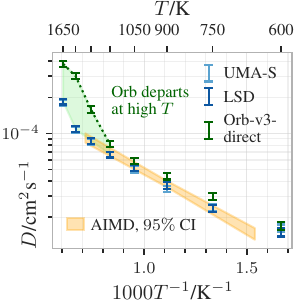}
    \caption{Li diffusivities w/ $95\%$ CI vs. (inverse) temperatures for an \texttt{UMA-S} target, \texttt{Orb} draft and their LSD combination.}
    \label{fig:diffusion_fig}
  \end{subfigure}
  \vskip 0.2in
  \begin{subfigure}{\linewidth}
    \centering
    \includegraphics[width=\linewidth]{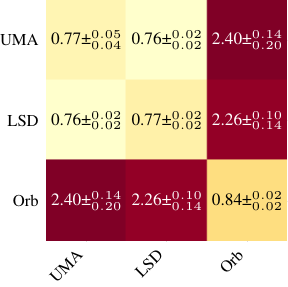}
    \caption{MMD-estimated Gaussian kernel distances (median-$L_2$ $\sigma$, amplified by factor $10^3$) for \texttt{UMA-S}, \texttt{Orb} and LSD.}
    \label{fig:kernel_fig}
  \end{subfigure}
  \vskip -0.5in
\end{wrapfigure}%
\texttt{Orb-v3-direct-20-omat} as well as their LSD combination (cf. \cref{fig:diffusion_fig}). As an ab-initio reference,
we also plot the Arrhenius relationship ($95\%$ confidence interval)
fitted to the AIMD trajectories obtained between $650 \text{K}$ and $1400 \text{K}$ by \citet{lopez_how_2024}. While the target model accurately reproduces the ab-initio data in its covered regime, the draft tends to overestimate diffusivity. Both MLIPs deviate from the expected Arrhenius relationship for high $T$, but the draft departs much more significantly. Consistently with its guarantees, LSD matches its target model diffusivities within sampling error.

\textbf{High-dimensional parity test}.
So far, we have tested for parity of several important physical observables, all of which have corresponded to long-time averages taken over low-dimensional marginals of the full trajectory distribution.
We now test the other extreme: parity of the high-dimensional joint distributions over short trajectories.
By sampling 1000 of such 10-step trajectories for 108-atom copper using \textsc{UMA-S}, \textsc{Orb-v3-direct-20-omat} and their LSD combination and treating each set of trajectories as a high-dimensional point cloud, we estimate the distance between their underlying distributions in a reproducing kernel Hilbert space by the maximum mean discrepancy (MMD) statistic \citep{gretton2012maximum} applied to pairs of such point clouds. For background on MMD and details of our setup, we refer to App. \cref{app:kernel}. \cref{fig:kernel_fig} shows that the LSD vs. \textsc{UMA} MMD is not higher than the MMD between \textsc{UMA} and itself at different random seeds, while \textsc{Orb} clearly samples from a different distribution of trajectories.

\section{Conclusion and Limitations}

We developed LSD, a novel speculative sampler that addresses the serial bottleneck of molecular dynamics through a combination of fast drafting and concurrent verification. We proved parity with the target distribution, analyzed rejection rates and speedup, and empirically validated our predictions. Our results demonstrate meaningful, error-free speedups on realistic systems; however, performance is system-size-dependent. For systems with $> \mathcal{O}(10^3)$ atoms, rejection rates currently become limiting and other forms of parallelism may be preferred, while for smaller systems speedups are often bound by the cost of the draft model. Furthermore, LSD focuses on throughput under the assumption of sufficient compute, which may not always be practical. In future work, target-to-draft teacher-student distillation or online inference-time learning could improve draft accuracy and, hence, push back the rejection-bound practicality limit to larger atom counts, while specialized draft models could improve cost fractions.

\section*{Code Availability}
We plan to release a reference implementation of LSD on \texttt{https://github.com/facebookresearch/LSD}\\
soon after publication.

\section*{Impact Statement}
Our work aims to accelerate scientific discovery in chemistry and materials science using MLIP-based simulations;
however, potential misuses exist. The method presented in this work improves the simulation lengths obtainable with an existing target model, and we made sure that all models used for this work were trained on datasets designed for applications beneficial to
society to mitigate these risks.

\section*{Acknowledgements}
The authors thank Juno Nam for the several useful discussions about MD settings and his help in the preparation of starting structures, as well as Filippo Bigi and Michele Ceriotti for sharing their insights about splitting schemes and bulk water MD. A.K. also thanks Vitalia Khamenya for her warm encouragement throughout all stages of this work.

A.K. acknowledges support by the Munich Data Science Institute (MDSI) at Technical University of Munich (TUM) via the Linde/MDSI Doctoral Fellowship.

\clearpage
\newpage
\bibliographystyle{assets/plainnat}
\bibliography{bibliography}

\clearpage
\newpage
\beginappendix

\section{Maximal Couplings}
\label{app:maximal_couplings}

\textbf{From random maps to conditional distributions.}
As an alternative and equivalent perspective to the formulation of random maps used throughout the main body of this work, we can notate the map $\textsc{verify}$ as the operation that samples a value $x_n$ of the verified random variable $X_n$ from a conditional distribution $\Pi_{X_n | Y_n}(\cdot | y_n)$,
\begin{equation}
\label{eq:verify_map_couplings}
\textsc{verify}(y_n \mid P, Q) = \vcentcolon x_n \leftarrow X_n \, | \, Y_n \sim \Pi_{X_n | Y_n}(\cdot | y_n).
\end{equation}
To recover the coupling and maximality properties \cref{eq:coupling,eq:maximality} in this new language, we first remark that $\Pi_{X_n | Y_n}(\cdot | y_n)$ needs to be chosen such that it marginalizes to
\begin{equation}
\label{eq:transport_property}
X_{n} \sim \int \Pi_{X_n | Y_n}(\cdot | y_n) Q(y_n | y_{n-1}) d y_n \overset{!}{=} P(\cdot | y_{n-1}).
\end{equation}
Instead of taking the conditional map $\Pi_{X_n | Y_n}(\cdot | y_n)$ to the center point (the route chosen in this work due to its closeness to implementation) others such as \citet{de-bortoli25a} have instead treated the product density $\Pi_{X_n, Y_n}(x_n, y_n) \vcentcolon = \Pi_{X_n | Y_n}(x_n | y_n) Q(y_n | y_{n-1})$ as the central object. Note that when we use the term 'density' and integral notation, we implicitly refer to a suitable integration measure for $X_n$. This unified treatment also includes all discrete probability mass functions (under the counting measure) as well as real generalized ``densities'' involving $\delta$ distributions (under linear combinations of the Lebesgue and Dirac measures).

\textbf{Maximal couplings.}
The product density is a coupling of $X_n, Y_n$ due to the way it marginalizes:\\
$\int \Pi_{X_n, Y_n}(x_n, y_n) d y_n = P(\cdot | y_{n-1})$ by \cref{eq:transport_property}, and $\int \Pi_{X_n, Y_n}(x_n, y_n) dx_n = Q(\cdot | y_{n-1})$ simply due to normalization.
In this formulation, the acceptance rate $\alpha \vcentcolon = P(X_n = Y_n) = 1-\beta$ becomes
\begin{equation}
\label{eq:acceptance_rate}
\alpha = \int \mathds{1}_{\{x \in \mathcal X, y \in \mathcal Y \mid x=y\}} \Pi_{X_n, Y_n}(X_n, Y_n) d^2 x_n y_n,
\end{equation}
with $\mathds{1}$ denoting the indicator function, and essentially corresponds to the total probability mass concentrated on the diagonal of the product density $\Pi_{X_n, Y_n}$. Our maximality property \cref{eq:maximality} then means that out of all product densities that marginalize to be a coupling of $X_n, Y_n$, a \emph{maximal} coupling should concentrate as much probability as possible on its diagonal. For further general treatment of maximal couplings we refer to \citet{lindvall_lectures_2002} and for an alternative perspective on the map \textsc{reflectionVerify} as a reflection-maximal coupling, we refer to \citet{Bou_Rabee_2020,de-bortoli25a}.

\section{Proof of \cref{thm:thm1}}
\label{app:thm1proof}
\prepost*
\begin{proof}

We first prove the first part of the statement, conservation of the coupling property.
By assumption, \textsc{verify} fulfills the coupling property \cref{eq:coupling} with respect to $Q', P'$, which directly implies
\[Y_n' \sim Q'\left(\, \cdot \, | f(y_{n-1})\right) \implies \textsc{verify}\left(Y_n'; U \mid P', Q', n\right) \sim P'\left(\, \cdot \, | f(y_{n-1})\right).\]
By the definition of the pushforward operation $*$ and the decomposition property of $P$ assumed in the theorem,
\begin{equation}
\label{eq:middleverifymap}
X_n = [g \circ \textsc{verify}]\left(Y_n'; U \mid P', Q', n\right) \sim g_* P'\left(\, \cdot \, | f(y_{n-1})\right) = P\left(\, \cdot \, | y_{n-1}\right),
\end{equation}
which proves the first part of the theorem statement.

For the second part, we also assume that \textsc{verify} fulfills the maximality property \cref{eq:maximality} and that $g$ is invertible. The proof proceeds by contradiction. Assume that a map $\textsc{verify}'$ exists that fulfills the coupling property \cref{eq:coupling} for $P, Q$ and has a higher acceptance rate than $\mathbb{P}(X_n = Y_n)$, i.e.
\begin{equation}
Y_n \sim Q(\cdot | y_{n-1}) \implies
\tilde X_n \vcentcolon = \textsc{verify}'(Y_n; U' \mid P, Q, n) \sim P(\cdot | y_{n-1}), \tag{contradiction assumption 1}
\end{equation}
\begin{equation}
    \mathbb{P}(\tilde X_n = Y_n) > \mathbb{P}(X_n = Y_n). \tag{contradiction assumption 2}
\end{equation}
The following construction would be possible:

$g$ by assumption has an inverse $g^{-1}$, so $\tilde X_n' = [g^{-1} \circ \textsc{verify}' \circ g](Y'_n; U' \mid P, Q, n)$ is well-defined. The event $\tilde X_n = Y_n$ (i.e., \textsc{verify}' acting as the identity) clearly implies the event $\tilde X_n' = Y_n'$. Thus,
\begin{equation}
\label{eq:inequalities}
\mathbb{P}(\tilde X_n' = Y_n') \geq \mathbb{P}(\tilde X_n = Y_n) > \mathbb{P}(X_n = Y_n),
\end{equation}
where the second inequality is contradiction assumption 2.

Let us finally define $X_n' \vcentcolon = \textsc{verify}(Y_n'; U \mid P', Q', n)$. Since, by construction, $X_n = g(X_n')$ and $Y_n = g(Y_n')$, $(X_n' = Y_n') \implies (X_n = Y_n)$ and therefore $\mathbb{P}(X_n = Y_n) \geq \mathbb{P}(X_n' = Y_n')$. Together with \cref{eq:inequalities}, this implies
\begin{equation}
\label{eq:inequality}
    \mathbb{P}(\tilde X_n' = Y_n') > \mathbb{P}(X_n' = Y_n')
\end{equation}
On the other hand, $[g^{-1} \circ \textsc{verify}' \circ g]$ fulfills the coupling property for $P',Q'$ as, by contradiction assumption 1, $\tilde X_n = [\textsc{verify}' \circ g](Y'_n; U' \mid P, Q, n) = \textsc{verify}'(Y_n; U' \mid P, Q, n) \sim P(\, \cdot \, | y_{n-1})$ and, therefore, by the assumed decomposition of $P$ and the definition of the pushforward operation $*$,
\begin{equation}
    \label{eq:contradictingmaximality}
    Y_n' \sim Q'\left(\, \cdot \, | f(y_{n-1})\right) \implies \tilde X_n' = g^{-1} (\tilde X_n) = [g^{-1} \circ \textsc{verify}' \circ g](Y'_n; U' \mid P, Q, n) \sim P'\left(\, \cdot \, | f(y_{n-1})\right).
\end{equation}
\cref{eq:contradictingmaximality,eq:inequality} would imply the existence of a coupling of $P',Q'$ with higher acceptance rate than $\textsc{verify}$, contradicting the assumed maximality of $\textsc{verify}$ and thus proving the statement.
\end{proof}

\section{Derivation of the rejection rate scaling formula}
\label{app:scaling_formula}
Let us derive \cref{eq:scaling_law} from the following two simplifying assumptions:
\simpone*
\simptwo*
By \cref{eq:rejection_rate}, the Mahalanobis distance $\lVert \boldsymbol{\delta} \rVert$ between the draft/target Gaussians of the momentum update determines the rejection rate of a verification step. The definitions of the updated momentum mean $\langle \mathbf{\tilde p}_n \rangle$, covariance $\boldsymbol{\Sigma}$ (\cref{eq:bob}) and friction timescale $\tau = \frac{1}{\gamma} \gg \Delta t$ lead to the expression
\begin{equation}
    \lVert \boldsymbol{\delta} \rVert = \left(\frac{\tau \Delta t}{2 k_B T}\right) ^{\frac{1}{2}}\lVert \boldsymbol{M}^{-\frac{1}{2}} \Delta \mathbf{F}\rVert + \mathcal{O}\left((\Delta t/\tau)^2\right)
\end{equation}
where $\Delta \mathbf{F} \in \mathbb{R}^{3N} = \mathbf{\tilde F} - \mathbf{F}$ is the draft-target force error. Following \cref{simp:simplification1}, we neglect the variance of rejections in time and instead simply assume a constant, time-independent error. \cref{simp:simplification2} then implies that each of the $G$ groups adds an equal and $G$-independent amount to the mass-scaled error term $\lVert \boldsymbol{M}^{-\frac{1}{2}} \Delta \mathbf{F} \Vert$. $G$-independence means the system can be scaled up or down without significant change to the force errors per atom group, which holds in particular if the draft and target model are local.

\cref{simp:simplification2} justifies the scaling ansatz $\lVert \boldsymbol{M}^{-\frac{1}{2}} \Delta \mathbf{F} \rVert = \sqrt{G} \tilde \varepsilon$ where $\tilde \varepsilon$ is the draft-target error per unit cell and the $\sqrt{G}$ scaling is due to orthogonality of per-atom forces in the all-atom space $\mathbb{R}^{3N}$. Therefore, we get the formula
\begin{equation}
\label{eq:scaling_law_derived}
    \exists \, \varepsilon > 0 \colon \langle \beta \rangle(N, \tau, \Delta t, T) \approx \text{erf}\left(( N \tau \Delta t)^{\frac{1}{2}}T^{-\frac{1}{2}}\varepsilon\right)
\end{equation}
where we absorbed all terms that do not depend on $N, \tau, \Delta t$ and $T$ into the empirical constant $\varepsilon$. As $\varepsilon$ is proportional to the draft-target error per unit cell $\tilde \varepsilon$, it is fixed for a given draft-target combination and needs to be determined empirically.

\section{Discussion of the pipelined verification approach}
\label{app:pipelining}
\subsection{Motivation for pipelining}
\textbf{Synchronous vs. pipelined verification.}
Seminal works from language modeling like \citet{leviathan2023fast,chen2023accelerating} and most follow-up works essentially use variants of the basic synchronous \cref{alg:speculative_sampling_non_pipelined} above (in contrast to MD, LLM autoregression is non-Markovian and hence we adapted the notation with $y_{0:n} \vcentcolon = (y_0, \dots y_{n-1})$). It alternates between the drafting of $L$ new steps while all target model instances are running idle, and synchronous parallel verification while the draft model is idle. Our method, in contrast, is a producer-consumer-type approach where draft steps get served as soon as a target model instance is idle.  A verifier protocol handles the asynchronous target model outputs and rolls back the draft model in case of a rejection to ensure that it operates on valid inputs. In our context, this choice adds efficiency through pipeline parallelism and also admits a qualitatively simpler analysis of the achievable speedup, which no longer depends on a lookup integer $L$.

\begin{algorithm}[t]
\caption{Speculative Sampling Step (Synchronous)}
\label{alg:speculative_sampling_non_pipelined}
\begin{algorithmic}
  \INPUT Lookahead integer $L$, initial sequence $x_{0:n}$, draft model $Q(\cdot | \cdot)$, target model $P(\cdot | \cdot)$.
  \FOR{$n \leq j < n+L$}
    \STATE Get $Q(\cdot | y_{0:j})$
    \STATE Sample $y_j \leftarrow Q(\cdot | y_{0:j})$
  \ENDFOR
  \PFOR{$n \leq j < n+L$}
    \STATE Get $P(\cdot | y_{0:j})$
    \STATE Sample $x_j \leftarrow \textsc{verify}(y_j; u | P, Q)$
  \ENDPFOR
  \STATE $l \leftarrow \min\{j \mid n \leq j < n+L, \, x_j \neq y_j\}$
  \STATE $x_{0:l+1} \leftarrow x_{0:n} + [x_n, \dots, x_l]$
  \OUTPUT $x_{0:l+1}$
\end{algorithmic}
\end{algorithm}

\textbf{Domain-specific problem constraints.}
Many academic setups in the LLM domain place the draft and target model on the same GPU, which takes full pipeline parallelism out of consideration due to the resulting contention. In contrast, pipelining is a natural optimization strategy for multi-device verification and has indeed been explored in this specific context by prior LLM work \citep{bradley2025amusd}. For molecular simulations, multi-device verification (and along with it, optimization through pipelining) becomes the sensible default: evaluating state-of-the-art MLIPs on a single input system often consumes the memory of a single modern GPU already at modest sizes of $\mathcal{O}(10^3)$ to $\mathcal{O}(10^4)$ atoms \citep{wood2025uma}. Moreover, inter-GPU communication overheads are often small compared to the typical pretrained MLIP query times ($\mathcal{O}(10-100)\text{ms}$), which further offsets the costs of a distributed multi-GPU setup.

\subsection{Ensuring end-to-end correctness of the pipelined verification protocol}
\label{subsec:correctness}

Our goal for the rest of \cref{subsec:correctness} is to formally verify that the pipelined LSD protocol \cref{alg:exact_lsd} always samples simulation trajectories from the joint distribution defined by the target model, even if the asynchronous target model calls return their verification results out of submission order. Before we discuss any of the internal bookkeeping in \cref{alg:exact_lsd}, let us first define a recursive black-box condition on its outputs that is equivalent to correct joint sampling, but more natural to prove for \cref{alg:exact_lsd}.

\begin{definition}[Recursive correctness]
\label{def:verified}
    Let $\mathcal{T} = (x_0, x_1, \dots, x_K)$ denote a trajectory of $K + 1 \in \mathbb{N}$ MD states starting at initial state $x_0$ and $P(\cdot \mid \cdot)$ denote the single-step conditional distribution of the target model. We call $x_n \in \mathcal{T}$ \texttt{verified} if

\begin{itemize}
    \item $n = 0$, or if
    \item $n > 0$ and we can certify that $x_n$ was drawn from a random variable $X_n \sim P(\cdot \, | \, x_{n-1})$, where $x_{n-1}$ is \texttt{verified}.
\end{itemize}
\end{definition}
If we can assert that each state $x_n$ in the output of \cref{alg:exact_lsd} is always \texttt{verified} as in \cref{def:verified}, this implies (by induction in $n$) that its output trajectories $\mathcal{T}$ come from the $x_0$-conditioned target joint distribution factorizing as $P(x_K, \dots, x_1 | x_0) = \prod_{n=1}^K P(x_n | x_{n-1})$.

\textbf{LSD algorithm and comments.}
For a simplified exposition to \cref{alg:exact_lsd}, our comments below draw a conceptual outline for how its \textsc{process\_new\_returns} procedure ensures a consistent verifier state despite out-of-order target model returns.
\begin{itemize}
    \item Due to draft rollbacks after rejections, the pipeline might (at multiple consecutive wall times) propose several draft state updates $y \mapsto y'$ for the same simulation time. We therefore represent draft steps as tuples ($y', \texttt{id}$) of a proposed new state and a unique, immutable, ascending integer key \texttt{id} that identifies this step throughout its lifetime.
    \item After submitting a new  draft step to a target model for verification, we register it in a buffer of \texttt{eligible\_pending} steps. This keeps track of all steps that are awaiting a target model result \emph{and} have not been flushed by a rejection.
    \item{
    Asynchronously during verification, a target model maps the updated states $y'$ to reflection-coupled states $x'$ according to the random map of \cref{thm:thm1} and returns the result to the main process. As discussed in \cref{subsec:speculative_sampling} and proven in App. \cref{app:thm1proof}, this mapping guarantees that $x'$ is a valid sample of the \emph{target}-distributed random variable $X' \sim P(\cdot \, | \, y)$, where $y$ is the draft state based on which the original draft update $y'$ was computed.
    \item Each time after the draft emits a new step and submits it to an idle target model, we call a routine that gathers all newly returned results from the pool of target models without blocking. If no target model returned since our last check, we proceed directly to the next draft call, else we iterate over all returned steps as follows:
    \begin{itemize}
        \item{
        \begin{itemize}
        \item If a returned step is still found in \texttt{eligible\_pending}, we remove it from \texttt{eligible\_pending} and add it to a buffer of \texttt{conditionally\_verified} \texttt{id}s. Note that this happens \emph{regardless} of acceptance ($x' = y'$) or rejection ($x' \neq y'$).
        \item Else, it means that the step has previously been flushed by an upstream rejection (see below). We abort here, i.e. ignore the target model result on the flushed step, and proceed to handle other newly returned steps.
        \end{itemize}
        }
        \item If $x' \neq y'$, we remove all steps with larger \texttt{id} from \texttt{eligible\_pending} and \texttt{conditionally\_verified} and roll back the current draft state to $x'$ (i.e., flush the downstream pipeline) whereas \texttt{id} just keeps counting up.
        \item We then check if there is at least one draft step in \texttt{conditionally\_verified} without pending predecessors. If true, we pop the earliest such step and append it to a buffer of \texttt{verified} steps. We repeat this inner loop until no more states can be added to \texttt{verified}, then proceed to handle other newly returned steps.
    \end{itemize}
}
\end{itemize}

The role of \texttt{conditionally\_verified} is therefore to cache out-of-order returns. Crucially, its downstream content is flushed as soon as we learn of an upstream rejection, and no further stale content can be added as \texttt{eligible\_pending} is also immediately flushed and the draft is rolled back to the last valid state before it can propose a new step. No step is moved from \texttt{conditionally\_verified} to \texttt{verified} unless we can exclude that any predecessor state has been, or could still be rejected. We formalize these insights while we proceed with the end-to-end correctness proof for \cref{alg:exact_lsd} below.

\paragraph{Formal verification of \cref{alg:exact_lsd}.} We now assert that all elements in the output $\texttt{V}$ of \cref{alg:exact_lsd} satisfy the recursive correctness condition \cref{def:verified}. First, note that (after their initialization) the step buffers \texttt{EP}, \texttt{CV}, \texttt{V}, which together define the \emph{verifier state}, are modified only during the call of \textsc{process\_new\_returns} and the preceding line, both of which run synchronously in the main event loop. None of the parallel workers running \textsc{verify\_step} modifies the verifier state, so race conditions between the main and worker processes cannot occur. Instead, all program behaviors that act on the verifier state serialize into a sequence of state transitions in the main event loop that we can partition based on the \texttt{id} value at which they occur.\par
Since \textsc{process\_new\_returns} always first removes a step from one buffer before (possibly) adding it to another and no $\texttt{id}$ is assigned more than once, it is clear that the five propositions $E, C, V, F, U$ defined below are exhaustive and mutually exclusive $\forall \, \texttt{id} \in \mathbb{N}_0$ $\forall \, \texttt{id}' \in \mathbb{N}_0$ (\texttt{id} is the \emph{current} event loop iteration and $\texttt{id}'$ points to \emph{any completed or future} step):
\begin{enumerate}
    \item $E \Leftrightarrow \exists (y'', \texttt{id}'') \in \texttt{EP} \colon \texttt{id}'' = \texttt{id}'$,
    \item $C \Leftrightarrow \exists (x'', \texttt{id}'') \in \texttt{CV} \colon \texttt{id}'' = \texttt{id}'$,
    \item $V \Leftrightarrow \exists (x'', \texttt{id}'') \in \texttt{V} \colon \texttt{id}'' = \texttt{id}'$,
    \item $F \Leftrightarrow \texttt{id}' < \texttt{id} \land \lnot (E \lor C \lor V)$,
    \item $U \Leftrightarrow \texttt{id}' \geq \texttt{id} \land \lnot (E \lor C \lor V)$.
\end{enumerate}
Qualitatively, these propositions enumerate the five respective cases where the $\texttt{id}'$-indexed draft step is eligible-pending, conditionally-verified, verified, flushed, or not (yet) created while the event loop is in iteration \texttt{id}. Their exhaustive-and-mutually-exclusive property implies that we can identify the set of all verifier states \cref{alg:exact_lsd} with the set $\{E, C, V, F, U\}^\mathbb{N}$. The program starts in the state $(V, U, U, \dots)$: the initial condition $x_0$, formally ``generated" by the ``step'' with $id=0$, starts as verified, while all next steps are not yet created. Each main event loop iteration executes transitions $\{E, C, V, F, U\}^\mathbb{N} \rightarrow \{E, C, V, F, U\}^\mathbb{N}$. Even though the exact transitions taken in each iteration clearly cannot be predicted from the current verifier state alone (they depend on the results \emph{and} the precise return timing of the concurrent \textsc{verify\_step} procedures), we show next that \cref{alg:exact_lsd} constrains the structure of possible transitions sufficiently well to guarantee two statements:
\begin{theorem}[End-to-end correctness]
\label{thm:endtoend}
\cref{alg:exact_lsd} satisfies two properties:
\begin{enumerate}
    \item After every completed event loop iteration \texttt{id}, the ordered values of the \texttt{V} buffer are a sequence of states $(x_0, \dots, x_n)$ satisfying the recursive correctness condition of \cref{def:verified},
    \item \cref{alg:exact_lsd} eventually halts if we can guarantee that every event loop iteration finishes after a finite time.
\end{enumerate}
\end{theorem}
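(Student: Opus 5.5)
We will prove the first property by induction over event loop iterations, using an invariant on the verifier state in $\{E, C, V, F, U\}^{\mathbb{N}}$. The invariant we aim for, after every completed iteration, is:
\begin{enumerate}
    \item[(I1)] the ids with label $V$ form a prefix $\{0, \dots, n\}$ of the non-flushed ids, ordered consistently with simulation time;
    \item[(I2)] every non-flushed id with label $E$ or $C$ has an input state that is either the output of the preceding non-flushed id, or the rollback state $x'$ of the last rejection;
    \item[(I3)] every entry $(x', \texttt{id}')$ in \texttt{CV} or \texttt{V} was produced by \textsc{verify\_step} applied to a draft update from input $y$.
\end{enumerate}
By \cref{thm:thm1} together with the coupling property of \textsc{reflectionVerify}, (I3) means $x' \sim P(\cdot \mid y)$.

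The key step is to show that for every $V$-labelled step, its input $y$ coincides with the verified predecessor $x_{n-1}$. We argue this by case analysis on the transitions $E \to C$, $C \to V$, $C \to F$ and $E \to F$ executed in \textsc{process\_new\_returns}.

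\emph{Accepted predecessor.} If the predecessor was accepted, then $x' = y'$, so the successor's input, which is the draft output $y'$, equals the verified output.

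\emph{Rejected predecessor.} If the predecessor was rejected, all larger ids in \texttt{EP} and \texttt{CV} are flushed immediately, before the draft proposes again. The next created id then has input $x'$ after the rollback. Any stale return carrying a flushed id fails the membership test in \texttt{EP} and is discarded.

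Since a step moves from $C$ to $V$ only when no predecessor carries label $E$ or $C$, its predecessor is already in $V$ and can no longer be flushed. Hence the step's input is a verified state, which is exactly \cref{def:verified}. The other transitions, namely creation $U \to E$ and flushes to $F$, preserve the invariants trivially. Because the main loop is serial, no interleaving with the workers can break them; we will state this explicitly.

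The main obstacle is the ordering subtlety. A successor may return before its predecessor and sit in $C$, and the predecessor may later be rejected. We must show that in this case the successor is flushed rather than promoted. This follows because a rejection of id $k$ removes all ids greater than $k$ from \texttt{CV}, and promotion to $V$ requires all predecessors to be out of $E$ and $C$. So the successor's promotion can only happen after the predecessor's accept or reject has been processed. I would formalize this as a monotonicity lemma: labels $V$ and $F$ are absorbing, and an id reaches $V$ only after all smaller ids are in $V \cup F$, with the largest smaller non-flushed id in $V$ being an accept.

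For the second property (halting), assume each iteration finishes in finite time and that the algorithm stops once \texttt{V} contains $K+1$ states. We use a progress argument. Every submitted step eventually returns, so the earliest id in $E$ eventually moves to $C$ or $F$. The earliest non-flushed id in $C$ then has no pending predecessors and moves to $V$. Therefore $|\texttt{V}|$ strictly increases within finitely many iterations after each submission. This holds even under repeated rejections, since a rejected step is itself verified as $x'$ and appended. It follows that $K$ additions require finitely many iterations, and the algorithm halts.
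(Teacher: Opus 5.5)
Your proposal is correct and follows essentially the paper's route. Your invariant (I2) plays the role of the paper's Lemma: a surviving step's input is either the draft output of its unflushed predecessor, which must then be accepted, or the rollback state $x'$ of the preceding rejected step. Your other ingredients also match the paper's: the flush-all-larger-ids and \textbf{continue}-guard argument, the \textbf{while}-guard argument putting the predecessor at the end of \texttt{V}, and the minimal-pending-id halting argument.

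Two small fixes are needed. First, the clause ``largest smaller non-flushed id in \texttt{V} being an accept'' in your monotonicity lemma is false in the rollback case, where that id is the rejected step itself. Second, in the halting part you should rule out the minimal pending step moving to $F$ rather than allowing it, since every smaller id is in $V\cup F$ and can no longer trigger a rejection.
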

\begin{corollary}
\cref{thm:endtoend} clearly implies that \cref{alg:exact_lsd} always returns an output which is moreover (see \cref{def:verified}) distributed according to the target joint distribution of trajectories.
\end{corollary}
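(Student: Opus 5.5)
The plan is to deduce the corollary from the two parts of \cref{thm:endtoend} and \cref{def:verified}, without reopening the bookkeeping of \cref{alg:exact_lsd}. There are two claims: that an output is always returned, and that it has the target joint law.

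\textbf{Step 1: an output is returned.} Every event loop iteration runs for a finite time. Each target call takes a bounded walltime, and the non-blocking gathering plus the inner loop of \textsc{process\_new\_returns} touch only finitely many buffered \texttt{id}s. Part 2 of \cref{thm:endtoend} then says \cref{alg:exact_lsd} halts. At halting it returns the \texttt{V} buffer, which has reached the requested length $K+1$; otherwise the loop would continue.

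\textbf{Step 2: correctness of every returned state.} Part 1 of \cref{thm:endtoend} applies after the last completed iteration. So the returned trajectory $\mathcal{T}=(x_0,\dots,x_K)$ satisfies \cref{def:verified}: each $x_n$ with $n>0$ is a draw from $X_n\sim P(\cdot\mid x_{n-1})$, where $x_{n-1}$ is itself verified.

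\textbf{Step 3: from verified states to the joint law.} I argue by induction on $n$ that $(X_1,\dots,X_n)\mid x_0$ has density $\prod_{k=1}^n P(x_k\mid x_{k-1})$.
\begin{itemize}
\item The base case $n=0$ is trivial.
\item For the step, condition on the realized prefix $(x_0,\dots,x_{n-1})$. By verification, $X_n$ has conditional law $P(\cdot\mid x_{n-1})$. Multiplying this kernel onto the prefix law, as in the chain rule, gives the product for $n$.
\end{itemize}
At $n=K$ this is exactly the target joint distribution.

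\textbf{Main obstacle.} The subtle point is that ``drawn from $P(\cdot\mid x_{n-1})$'' must hold conditionally on the entire past of the algorithm. This includes rejection and flushing events and the timing of target returns, not just conditionally on $x_{n-1}$. Otherwise, selection effects could bias the kept samples.

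I would handle this with two observations.
\begin{itemize}
\item The coupling property of \cref{thm:thm1} gives $X'\sim P(\cdot\mid y)$ using only fresh randomness $U$ and the draft noise of that step. Both are independent of earlier events given the input state $y$.
\item A state is kept or flushed only according to outcomes at smaller \texttt{id}s, i.e. upstream of it. Whether $x_n$ enters \texttt{V} therefore depends only on predecessors' results, never on $x_n$'s own value.
\end{itemize}
Hence conditioning on the state being kept does not distort its law given $x_{n-1}$. This independence is the Markov property the induction in Step 3 needs.
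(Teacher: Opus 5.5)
Your proposal is correct and follows the paper's route. You get halting from part 2 of \cref{thm:endtoend} (under its finite-iteration hypothesis, which you should state as an assumption rather than assert as a fact), recursive correctness of \texttt{V} from part 1, and then the induction over $n$ that the paper invokes just before \cref{thm:endtoend} to reach $\prod_{n=1}^K P(x_n \mid x_{n-1})$. Your selection-effects remark (a step's own draft noise and $u$ are fresh, while its survival in \texttt{V} depends only on outcomes of smaller \texttt{id}s) makes explicit a conditional-independence point the paper leaves implicit in \cref{def:verified}, and it is a sound justification of the induction step.
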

\begin{proof}
We will abbreviate ``Proposition $X$ is true for step $\texttt{id}'$ in event loop iteration \texttt{id}'' (with $X  \in \{E, C, V, F, U\}$) by the shorthand notation $X(\texttt{id},\texttt{id}')$. Each event loop iteration $\texttt{id}$ executes a sequence of zero, one, or multiple $\texttt{id}'$ state transitions, which we will abbreviate using the notation $X(\texttt{id},\texttt{id}') \mapsto Y(\texttt{id},\texttt{id}')$. Strictly speaking, \cref{alg:exact_lsd} implements the removal and addition of an $\texttt{id}'$ from one buffer into another in two directly subsequent program instructions, and hence moving $\texttt{id}'$ implies a transient transition into the $F$ state for one instruction duration. We may however always consider the two instructions as one atomic moving operation without loss of generality, as we have already established that races between the main and worker processes do not occur on the verifier state. Finally, we can define (due to the indexing of draft steps by a unique $\texttt{id}'$):
\begin{itemize}
    \item $y'\texttt{[id$'$]} \vcentcolon = [(y'', \texttt{id}'') \in \texttt{EP} \colon \texttt{id}'' = \texttt{id}']\texttt{[0,0]}$ if $E(\texttt{id}, \texttt{id}')$,
    \item $x'\texttt{[id$'$]} \vcentcolon = [(x'', \texttt{id}'') \in \texttt{CV} \colon \texttt{id}'' = \texttt{id}']\texttt{[0,0]}$ if $C(\texttt{id}, \texttt{id}')$,
    \item $x'\texttt{[id$'$]} \vcentcolon = [(x'', \texttt{id}'') \in \texttt{V} \colon \texttt{id}'' = \texttt{id}']\texttt{[0,0]}$ if $V(\texttt{id}, \texttt{id}')$.
\end{itemize}
As a first step towards the proof, we note that \cref{alg:exact_lsd} admits no transitions of the form $V(\texttt{id},\texttt{id}') \mapsto X(\texttt{id},\texttt{id}')$, and admits transitions $X(\texttt{id},\texttt{id}') \mapsto V(\texttt{id},\texttt{id}')$ only if $X = C$. We can therefore focus on the conditions under which \cref{alg:exact_lsd} executes a transition $C(\texttt{id},\texttt{id}') \mapsto V(\texttt{id},\texttt{id}')$, as this is the only way in which the state of the \texttt{V} buffer can change during an event loop iteration. The following \cref{lemma:vconditions} is helpful:
\begin{lemma}
\label{lemma:vconditions}
After any event loop iteration of \cref{alg:exact_lsd} with counter $\texttt{id} \in \mathbb{N}_0$, the following is true for any $\texttt{id}' \in \mathbb{N}$:
\begin{align*}
    C(\texttt{id},\texttt{id}') \implies
    &(\exists \texttt{id}'' < \texttt{id}'\colon V(\texttt{id},\texttt{id}'') \land x'\texttt{[id$'$]} \sim P(\cdot \, | \, x''\texttt{[id$''$])})\\
    \lor \, &(\exists \texttt{id}''  < \texttt{id}' \colon C(\texttt{id},\texttt{id}'') \land x'\texttt{[id$'$]} \sim P(\cdot \, | \, x''\texttt{[id$''$])})\\
    \lor \, &(\exists \texttt{id}''  < \texttt{id}'\colon E(\texttt{id},\texttt{id}'') \land x'\texttt{[id$'$]} \sim P(\cdot \, | \, y''\texttt{[id$''$])})
\end{align*}
where $P(\cdot \, | \, \cdot)$ is the single-step conditional distribution of the target model.
\end{lemma}
\begin{proof}[Proof of \cref{lemma:vconditions}]
Suppose $C(\texttt{id}, \texttt{id}')$ while the event loop counter is \texttt{id}. By \cref{thm:thm1}, \textsc{verify\_step} guarantees $x'\texttt{[id$'$]} \sim P(\cdot \mid y)$, where $y$ is the draft state passed when step $\texttt{id}'$ was submitted. It suffices to identify $y$ with $x''\texttt{[id$''$]}$ for some $\texttt{id}''$ satisfying $V(\texttt{id}, \texttt{id}'')$ or $C(\texttt{id}, \texttt{id}'')$, or with $y''\texttt{[id$''$]}$ for some $\texttt{id}''$ satisfying $E(\texttt{id}, \texttt{id}'')$.

\textbf{Identifying $y$.}
In \cref{alg:exact_lsd}, the draft state $y$ is assigned at three locations: (a)~initialization sets $y \leftarrow x_0 = x''\texttt{[0]}$, with step~$0$ in \texttt{V}; (b)~the line after submission sets $y \leftarrow y'$, the draft proposal of the step just added to \texttt{EP}; and (c)~rejection in \textsc{process\_new\_returns} sets $y \leftarrow x'$, the verified state of the rejected step, which resides in \texttt{CV} (added before the rejection check and retained by the flush, which keeps \texttt{CV} entries with $\texttt{id} \leq \texttt{id}_r$). Writing $\texttt{id}''$ for the step whose state $y$ was set to, we have $\texttt{id}'' < \texttt{id}'$ in each case (since $y$ was last assigned during the iteration preceding the one that created step $\texttt{id}'$), and step $\texttt{id}''$ was in \texttt{V}, \texttt{EP}, or \texttt{CV} respectively at that time.

\textbf{Step $\texttt{id}''$ was not flushed by iteration $\texttt{id}$.}
Since $\texttt{id}'' < \texttt{id}'$ and $C(\texttt{id}, \texttt{id}')$ implies step $\texttt{id}'$ was created, step $\texttt{id}''$ was also created, ruling out $U(\texttt{id}, \texttt{id}'')$. Suppose for contradiction that $F(\texttt{id}, \texttt{id}'')$ holds, meaning step $\texttt{id}''$ was flushed by a rejection of some step $\texttt{id}_r$. The flush boundaries ($\texttt{EP}$ and $\texttt{CV}$ each retain $\texttt{id} \leq \texttt{id}_r$) imply $\texttt{id}_r < \texttt{id}''$. Since $\texttt{id}' > \texttt{id}'' \geq \texttt{id}_r$, the same flush also removed step $\texttt{id}'$ from either \texttt{EP} or \texttt{CV}. Once removed from \texttt{EP}, the \textbf{continue} guard prevents step $\texttt{id}'$ from ever re-entering \texttt{CV}. Hence $C(\texttt{id}, \texttt{id}')$ cannot hold, leading to a contradiction.

\textbf{Matching the disjuncts.}
At iteration \texttt{id}, step $\texttt{id}''$ is in $\{E, C, V\}$. If $y$ was set via (a) or (c), then $y = x''\texttt{[id$''$]}$ and step $\texttt{id}''$ is still in state $C$ or $V$ (if it was not in $V$ from the outset, the only transition possible is $C(\texttt{id}_v, \texttt{id$''$})\mapsto V(\texttt{id}_v, \texttt{id$''$})$ in some iteration with $\texttt{id}$''$ < \texttt{id}_v \leq \texttt{id}' $ since we already excluded a flush), so then the first or second disjunct holds. If $y$ was set via (b), then $y = y''\texttt{[id$''$]}$ and step $\texttt{id}''$ was initially in $E$. If it remains in $E$, the third disjunct holds. Otherwise, step $\texttt{id}''$ has transitioned from $E$ to $C$ (or further to $V$), meaning its verification returned. Had the step been rejected ($x''\texttt{[id$''$]} \neq y''\texttt{[id$''$]}$), the resulting flush would remove step $\texttt{id}'$ as shown above, contradicting $C(\texttt{id}, \texttt{id}')$. Therefore the step was accepted: $x''\texttt{[id$''$]} = y''\texttt{[id$''$]} = y$, and the first or second disjunct applies.
\end{proof}

We now use \cref{lemma:vconditions} to complete the proof of \cref{thm:endtoend}.

\textbf{Statement 1 (Recursive correctness).}
As noted above, the only transition affecting $\texttt{V}$ is $C(\texttt{id},\texttt{id}') \mapsto V(\texttt{id},\texttt{id}')$, executed by the \textbf{while} loop in \textsc{process\_new\_returns}: step $\texttt{id}'$ is popped from $\texttt{CV}$ and appended to $\texttt{V}$ only when $\texttt{id}' = \min(\texttt{CV}\texttt{[:,1]})$ and $\texttt{id}' < \min(\texttt{EP}\texttt{[:,1]} + [\infty])$. Consider the moment when this transition occurs. By the reasoning in the proof of \cref{lemma:vconditions}, we have $x'\texttt{[id$'$]} \sim P(\cdot \, | \, y)$ where $y$ is the draft state of some step $\texttt{id}'' < \texttt{id}'$ that satisfies one of the three cases in \cref{lemma:vconditions}. Assuming a transition $C(\texttt{id},\texttt{id}') \mapsto V(\texttt{id},\texttt{id}')$ excludes two of them:
\begin{itemize}
\item $\texttt{id}''$ is not in $\texttt{CV}$, since $\texttt{id}'' < \texttt{id}' = \min(\texttt{CV}\texttt{[:,1]})$,
\item $\texttt{id}''$ is not in $\texttt{EP}$, since $\texttt{id}'' < \texttt{id}' < \min(\texttt{EP}\texttt{[:,1]} + [\infty])$.
\end{itemize}
Therefore $V(\texttt{id}, \texttt{id}'')$ must hold, so by \cref{lemma:vconditions} $x'\texttt{[id$'$]} \sim P(\cdot \, | \, x''\texttt{[id$''$]})$ with $x''\texttt{[id$''$]}$ already in $\texttt{V}$.

It remains to show that $\texttt{id}''$ is the immediate predecessor of $\texttt{id}'$ in the ordered $\texttt{V}$ buffer. First, note that the transition $C(\texttt{id}, \texttt{id}') \mapsto V(\texttt{id}, \texttt{id}')$ always appends $\texttt{id}'$ at the end of $\texttt{V}$ in strictly increasing $\texttt{id}$ order: any step $\texttt{id}'''$ with $\texttt{id}''' \leq \max(\texttt{V}\texttt{[:,1]})$ satisfies $V(\texttt{id}, \texttt{id}''')$ or $F(\texttt{id}, \texttt{id}''')$ (having previously undergone $C \mapsto V$ or been flushed), hence $\lnot C(\texttt{id}, \texttt{id}''')$. We show $\texttt{id}'' = \max(\texttt{V}\texttt{[:,1]})$, i.e., $\lnot V(\texttt{id}, \texttt{id}''')$ for all $\texttt{id}'''$ with $\texttt{id}'' < \texttt{id}''' < \texttt{id}'$. In cases (a) and (b) of the proof of \cref{lemma:vconditions}, $\texttt{id}'' \in \{0, \texttt{id}' - 1\}$, so no such $\texttt{id}'''$ exists. In case (c), $\texttt{id}''$ is a rejected step: $V(\texttt{id}, \texttt{id}''')$ would require a prior transition $C(\texttt{id}, \texttt{id}''') \mapsto V(\texttt{id}, \texttt{id}''')$, which is blocked by its \textbf{while} loop guard as long as $E(\texttt{id}, \texttt{id}'')$ holds (since $\texttt{id}'' < \texttt{id}'''$). Step $\texttt{id}''$ satisfies $E(\texttt{id}, \texttt{id}'')$ until its verification returns; upon return, the transition $E(\texttt{id}, \texttt{id}'') \mapsto C(\texttt{id}, \texttt{id}'')$ triggers a rejection flush (since $x''\texttt{[id$''$]} \neq y''\texttt{[id$''$]}$ in case (c)), which causes $C(\texttt{id}, \texttt{id}''') \mapsto F(\texttt{id}, \texttt{id}''')$ or $E(\texttt{id}, \texttt{id}''') \mapsto F(\texttt{id}, \texttt{id}''')$ before step $\texttt{id}'''$ can undergo $C \mapsto V$.

Therefore, each transition $C(\texttt{id}, \texttt{id}') \mapsto V(\texttt{id}, \texttt{id}')$ appends an element satisfying $x'\texttt{[id$'$]} \sim P(\cdot \, | \, x''\texttt{[id$''$]})$, where $x''\texttt{[id$''$]}$ is the last element of $\texttt{V}$ before appending and is \texttt{verified} by the inductive hypothesis, which shows that the recursive correctness condition \cref{def:verified} (and by extension, correct sampling from the target joint distribution) is satisfied.

\textbf{Statement 2 (Halting).} We show that $|\texttt{V}|$ increases by at least one after finitely many event loop iterations, which implies termination since $|\texttt{V}|$ must grow from $1$ to $K + 1$. As long as $|\texttt{V}| < K + 1$, each iteration emits a new draft step, so after this event loop iteration, $\texttt{V}$ has either grown immediately by the new step or there exists an $\texttt{id}'$ with $E(\texttt{id}, \texttt{id}') \lor C(\texttt{id}, \texttt{id}')$. In the second case, let $m = \min\{\texttt{id}' \colon E(\texttt{id}, \texttt{id}') \lor C(\texttt{id}, \texttt{id}')\}$. For all $\texttt{id}''' < m$, only $V(\texttt{id}, \texttt{id}''')$ or $F(\texttt{id}, \texttt{id}''')$ can hold (by minimality of $m$, and since all such steps have been created, ruling out $U$). Steps satisfying $V$ or $F$ cannot trigger new rejections ($V$ steps were already processed; returns from $F$ steps are discarded by the \textbf{continue} guard). Therefore the transitions $E(\texttt{id}, m) \mapsto F(\texttt{id}, m)$ and $C(\texttt{id}, m) \mapsto F(\texttt{id}, m)$ cannot occur as any flush requires a prior rejection.

If $C(\texttt{id}, m)$: all steps with $\texttt{id}' \leq m$ satisfy $\lnot E(\texttt{id}, \texttt{id}')$ (by minimality of $m$ and since $C(\texttt{id}, m)$ excludes $E(\texttt{id}, m)$), so the \textbf{while} loop guard is satisfied and $C(\texttt{id}, m) \mapsto V(\texttt{id}, m)$ executes immediately.

If $E(\texttt{id}, m)$: step $m$'s \textsc{verify\_step} completes after finite time (the assumption that each event loop iteration finishes in finite time implies every \textsc{verify\_step} returns, since otherwise \textbf{fetch\_idle\_worker} would eventually block indefinitely). Once the return is processed by some iteration with $\texttt{id}'$, $E(\texttt{id}', m) \mapsto C(\texttt{id}', m)$ occurs, and $C(\texttt{id}', m) \mapsto V(\texttt{id}', m)$ follows by the same argument (whether accepted or rejected, step $m$ remains in $\texttt{CV}$ as its minimum-$\texttt{id}$ element).

Thus $|\texttt{V}|$ increases by at least $1$ after finitely many iterations. After at most $K$ such increases, $|\texttt{V}| = K + 1$.

\end{proof}

\newcommand{\FANCYCOMMENT}[1]{{\color{darkgreen} $\triangleright$ {#1}}}
\newcommand{\ECCOMMENT}[1]{{\color{eccolor} $\triangleright$ {#1}}}

\begin{algorithm}[p]
\caption{Pipelined Langevin Speculative Dynamics, ABOBA Variant, No Speculative Error Correction}
\label{alg:exact_lsd}
\vspace{0.5em}
\textbf{Procedure} \textsc{main}():
\vspace{0.3em}
\begin{algorithmic}

\INPUT \texttt{DraftModel}, \texttt{TargetModelPool}, final step count $K$, initial positions and momenta $x_0 = (\mathbf{q}_0, \mathbf{p}_0)$.

\FANCYCOMMENT{Initialize step buffers: \texttt{EP} (\texttt{eligible\_pending}), \texttt{CV} (\texttt{conditionally\_verified}), \texttt{V} (\texttt{verified})}
\STATE  $\texttt{EP} \leftarrow [\,\,]$ ; \; $\texttt{CV} \leftarrow [\,\,]$ ; \; $\texttt{V} \leftarrow [\,(x_0, 0)\,]$

\FANCYCOMMENT{Initialize current draft state $y$, \texttt{id} (count that identifies each draft step over its lifetime)}
\STATE $y \leftarrow x_0$ ; \; $\texttt{id} \leftarrow 1$
\WHILE{$|\texttt{V}| < K + 1$}
    \STATE $(y', \, \mathbf{\tilde F})  \leftarrow \textsc{draft\_step}(y)$ \FANCYCOMMENT{Propose next draft state $y'$ and draft forces $\mathbf{\tilde F}$ for its verification}
    \STATE $\texttt{TargetModel} \leftarrow \textbf{fetch\_idle\_worker}$ (\texttt{TargetModelPool}) \FANCYCOMMENT{Blocks as long as all are busy}
    \STATE \textbf{submit} \textsc{verify\_step}$(y, \, y', \, \mathbf{\tilde F}, \, \texttt{id})$ to \texttt{TargetModel} \FANCYCOMMENT{Non-blocking}
    \STATE $\texttt{EP} \leftarrow \texttt{EP} + [\,(y',\texttt{id})\,]$ ; \; $\texttt{id} \leftarrow \texttt{id} + 1$ ; \; $y \leftarrow y'$
    \STATE $(\texttt{EP}, \, \texttt{CV}, \, \texttt{V}, \, y) \leftarrow \textsc{process\_new\_returns}$(\texttt{TargetModelPool}, \texttt{EP}, \texttt{CV}, \texttt{V}, $y$)
\ENDWHILE
\OUTPUT $\texttt{V}$
\end{algorithmic}

\vspace{0.5em}
\noindent\rule{\linewidth}{0.4pt}
\vspace{0.3em}

\textbf{Procedure} \textsc{draft\_step}($y$): \hfill \textcolor{darkgreen}{[synchronous]}
\vspace{0.2em}

\FANCYCOMMENT{Implements an ABOBA integrator step}
\begin{algorithmic}
\STATE $(\mathbf{\tilde q}, \mathbf{\tilde p}) \leftarrow y$
\STATE  $\mathbf{\tilde{q}} \leftarrow \mathbf{\tilde{q}} + \frac{\Delta t}{2}\,\mathbf{M}^{-1}\mathbf{\tilde{p}}$
\STATE  $\mathbf{\tilde{F}} \leftarrow$ \texttt{DraftModel}.get\_forces($\mathbf{\tilde{q}}$)
\STATE  $\langle \mathbf{\tilde{p}} \rangle \leftarrow$ [\cref{eq:bob}]$(\mathbf{\tilde{F}})$ ; \; $\boldsymbol{\Sigma} \leftarrow$ [\cref{eq:bob}]$(T, \gamma, \Delta t)$
\STATE  $\mathbf{\tilde{p}} \leftarrow \mathcal{N}(\cdot\,; \langle \mathbf{\tilde{p}} \rangle, \boldsymbol{\Sigma})$
\STATE  $\mathbf{\tilde{q}} \leftarrow \mathbf{\tilde{q}} + \frac{\Delta t}{2}\,\mathbf{M}^{-1}\mathbf{\tilde{p}}$
\STATE $y' \leftarrow (\mathbf{\tilde q}, \mathbf{\tilde p}) $
\STATE \textbf{return} $(y', \mathbf{\tilde F})$
\end{algorithmic}

\textbf{Procedure} \textsc{verify\_step}$(y, y', \mathbf{\tilde F}, \texttt{id}')$: \hfill \textcolor{darkgreen}{[asynchronous]}
\vspace{0.2em}

\FANCYCOMMENT{Runs asynchronously to main process in parallel \textsc{TargetModel} processes}
\begin{algorithmic}
\STATE $(\mathbf{\tilde q}, \mathbf{\tilde p}) \leftarrow y$; \; $(\mathbf{\tilde q'}, \mathbf{\tilde p'}) \leftarrow y'$; \; $\langle \mathbf{\tilde{p}} \rangle \leftarrow$ [\cref{eq:bob}]$(\mathbf{\tilde{F}})$
\STATE  $\mathbf{\tilde{q}} \leftarrow \mathbf{\tilde{q}} + \frac{\Delta t}{2}\,\mathbf{M}^{-1}\mathbf{\tilde{p}}$ \FANCYCOMMENT{map $f$ in \cref{thm:thm1}}

\STATE  $\mathbf{F} \leftarrow$ \texttt{TargetModel}.get\_forces($\mathbf{\tilde{q}}$) ; \; $\langle \mathbf{p} \rangle \leftarrow$ [\cref{eq:bob}]$(\mathbf{F})$ ; \; $\boldsymbol{\Sigma} \leftarrow$ [\cref{eq:bob}]$(T, \gamma, \Delta t)$

\FANCYCOMMENT{\textbf{Note}: reflection coupling \cref{eq:reflectionVerify} acts on $\mathbf{\tilde p}'$, i.e. the stored draft momentum \textbf{after} sampling, see \cref{thm:thm1}.}

\STATE $u \leftarrow \text{Uniform}([0,1])$ ; \; $\mathbf{p} \leftarrow$ \textsc{reflectionVerify}($\mathbf{\tilde{p}}'\,; u \mid \langle \mathbf{p} \rangle, \langle \mathbf{\tilde{p}} \rangle$)

\STATE  $\mathbf{q} \leftarrow \mathbf{\tilde{q}} + \frac{\Delta t}{2}\,\mathbf{M}^{-1}\mathbf{p}$ \FANCYCOMMENT{map $g$ in \cref{thm:thm1}}

\STATE  $x' \leftarrow (\mathbf{q}, \mathbf{p})$ \FANCYCOMMENT{By \cref{thm:thm1}, $x' \sim P(\cdot \, | \, y)$ if $y' \sim Q(\cdot \, | \, y)$}
\STATE \textbf{return} $(y', x', \texttt{id}')$ to main process
\end{algorithmic}

\textbf{Procedure} \textsc{process\_new\_returns}(\texttt{TargetModelPool}, \texttt{EP}, \texttt{CV}, \texttt{V}, $y$) \hfill \textcolor{darkgreen}{[synchronous]}
\vspace{0.2em}

\begin{algorithmic}
\STATE $\texttt{R} \leftarrow$ \textbf{gather\_new\_returns}(\texttt{TargetModelPool}) \FANCYCOMMENT{Returns empty if no new returns}
\FOR{each $(y', x', \texttt{id}')$ in $\texttt{R}$}
    \SHORTIF{$(y', \texttt{id}') \in \texttt{EP}$}{$\texttt{EP} \leftarrow [(y'', \texttt{id}'') \in \texttt{EP} \mid \texttt{id}'' \neq \texttt{id}']$ \textbf{else} \textbf{continue}} \FANCYCOMMENT{ignore step if flushed}
    \STATE $\texttt{CV} \leftarrow \texttt{CV} + [(x', \texttt{id}')]$
    \IF[\FANCYCOMMENT{Flush all downstream steps and roll back draft if the step was rejected}]{$x' \neq y'$}{
    \STATE $\texttt{EP} \leftarrow [(y'', \texttt{id}'') \in \texttt{EP} \mid \texttt{id}'' \leq \texttt{id}']$ ; \; $\texttt{CV} \leftarrow [(x'', \texttt{id}'') \in \texttt{CV} \mid \texttt{id}'' \leq \texttt{id}']$ ; \; $y \leftarrow  x'$}
    \ENDIF
    \WHILE{$\texttt{CV} \neq [\, \,] \; \wedge \; \min(\texttt{CV}\texttt{[:,1]}) < \min(\texttt{EP}\texttt{[:,1]} + [\infty])$}
        \STATE $\texttt{V} \leftarrow \texttt{V} + [\texttt{CV}.\textbf{pop}(\min(\texttt{CV}\texttt{[:,1]}))]$
    \ENDWHILE
\ENDFOR
\STATE \textbf{return} $(\texttt{EP}, \, \texttt{CV}, \, \texttt{V}, \, y)$
\end{algorithmic}
\end{algorithm}

\subsection{Quantifying the efficiency gain from pipelining.}
The pipelined (asynchronous) setup provides a meaningful advantage over the synchronous approach of \cref{alg:speculative_sampling_non_pipelined}, which we can make precise. Under identical use of \cref{simp:simplification1} (i.i.d.\ Bernoulli rejections at constant rate $\langle \beta \rangle = 1 - \langle \alpha \rangle$), the synchronous speedup $F_\text{sync}$ is derived as Theorem 3.8 in \citet{leviathan2023fast}. Adapted to our notation with time-averaged acceptance rate $\langle \alpha \rangle$ and cost fraction $c$, it reads
\begin{equation}
    F_\text{sync} = \frac{1 - \langle \alpha \rangle^{L+1}}{(1-\langle \alpha \rangle)(L c + 1)},
\end{equation}
where $L$ is the lookahead (number of draft steps per synchronous round). Our pipelined speedup formula \cref{eq:empirical_speedup} (with $\langle \beta \rangle = 1 - \langle \alpha \rangle$) has the simpler form
\begin{equation}
    F_\text{async} = \frac{1}{1-\langle \alpha \rangle + c}.
\end{equation}
A key qualitative difference is that $F_\text{sync}$ can never tightly reach the bare draft-target speedup $\frac{1}{c}$, even for perfect acceptance rates $\langle \alpha \rangle \rightarrow 1$. This is expected, as the synchronous pipeline can only run at draft throughput for bursts of $L$ steps, after which the draft model must pause for one synchronous verification call. Meanwhile, pipelined LSD can indeed sustain full draft throughput (i.e., speedup $\frac{1}{c}$) when $\langle \alpha \rangle \rightarrow 1$. Formally, in the nontrivial case $c < 1$,
\begin{equation}
    F_\text{sync} = \frac{\sum_{k=0}^{L}\langle \alpha \rangle ^k}{L c+1} \leq \frac{L +1}{L c+1} < \frac{1}{c} = \lim_{\langle \alpha \rangle \to 1} F_\text{async},
\end{equation}
where we used the truncated geometric series expansion, followed by $\langle \alpha \rangle \leq 1$, followed by the strict inequality $c < 1$.

\section{Speculative Error Correction Details}
\label{app:ec}
\cref{alg:exact_lsd_ec} documents our complete implementation of speculative error correction (EC, additions to \cref{alg:exact_lsd} highlighted in blue). We obtain the last-known historical error by setting the error cache $\Delta F$ to the force error between the target and draft model on the last step that has been added to the \texttt{verified} queue, i.e., committed to the simulation. In initial testing of EC, we used the error relative to the \emph{EC-corrected} draft prediction, however we found that this often leads to a positive self-feedback causing instability. We solve this issue by instead computing the errors relative to the \emph{EC-uncorrected} draft prediction. After this change, we have no longer observed any instability-driven EC failure in our experiments.

Note that the correctness and halting result \cref{thm:endtoend} proven for \cref{alg:exact_lsd} in \cref{subsec:correctness} remains equally valid for \cref{alg:exact_lsd_ec}: EC changes only the forces that are used to propose a draft step, whereas the target forces are not modified. Therefore, the coupling property from \cref{thm:thm1}, used in \cref{lemma:vconditions} to prove \cref{thm:endtoend}, remains valid as the target sampling guarantee is independent of the chosen draft distribution.

\begin{algorithm}[p]
\caption{Pipelined Langevin Speculative Dynamics, ABOBA Variant, \textcolor{eccolor}{with Speculative Error Correction}}
\label{alg:exact_lsd_ec}
\vspace{0.5em}
\textbf{Procedure} \textsc{main}():
\vspace{0.3em}
\begin{algorithmic}

\INPUT \texttt{DraftModel}, \texttt{TargetModelPool}, final step count $K$, initial positions and momenta $x_0 = (\mathbf{q}_0, \mathbf{p}_0)$.

\FANCYCOMMENT{Initialize step buffers: \texttt{EP} (\texttt{eligible\_pending}), \texttt{CV} (\texttt{conditionally\_verified}), \texttt{V} (\texttt{verified})}
\STATE  $\texttt{EP} \leftarrow [\,\,]$ ; \; $\texttt{CV} \leftarrow [\,\,]$ ; \; $\texttt{V} \leftarrow [\,(x_0, 0)\,]$

\ECCOMMENT{Initialize current draft state $y$, \texttt{id}, error cache $\Delta \textbf{F}$ for speculative error correction (EC)}
\STATE $y \leftarrow x_0$ ; \; $\texttt{id} \leftarrow 1$ \textcolor{eccolor}{; \, $\Delta \textbf{F} \leftarrow 0$}
\WHILE{$|\texttt{V}| < K + 1$}
    \STATE $(y', \, \mathbf{\tilde F})  \leftarrow \textsc{draft\_step}(y\textcolor{eccolor}{, \Delta \mathbf{F}})$ \FANCYCOMMENT{Propose next draft state $y'$ and draft forces $\mathbf{\tilde F}$ for its verification}
    \STATE $\texttt{TargetModel} \leftarrow \textbf{fetch\_idle\_worker}$ (\texttt{TargetModelPool}) \FANCYCOMMENT{Blocks as long as all are busy}
    \STATE \textbf{submit} \textsc{verify\_step}$(y, \, y', \, \mathbf{\tilde F}, \, \textcolor{eccolor}{\Delta \mathbf{F}, \,} \texttt{id})$ to \texttt{TargetModel} \FANCYCOMMENT{Non-blocking}
    \STATE $\texttt{EP} \leftarrow \texttt{EP} + [\,(y',\texttt{id})\,]$ ; \; $\texttt{id} \leftarrow \texttt{id} + 1$ ; \; $y \leftarrow y'$
    \STATE $(\texttt{EP}, \, \texttt{CV}, \, \texttt{V}, \, y\textcolor{eccolor}{, \Delta \mathbf{F}}) \leftarrow \textsc{process\_new\_returns}$(\texttt{TargetModelPool}, \texttt{EP}, \texttt{CV}, \texttt{V}, $y$\textcolor{eccolor}{, $\Delta \mathbf{F}$})
\ENDWHILE
\OUTPUT $\texttt{V}$
\end{algorithmic}

\textbf{Procedure} \textsc{draft\_step}($y$\textcolor{eccolor}{$, \Delta \mathbf{F}$}): \hfill \textcolor{darkgreen}{[synchronous]}
\vspace{0.2em}

\begin{algorithmic}
\STATE $(\mathbf{\tilde q}, \mathbf{\tilde p}) \leftarrow y$
\STATE  $\mathbf{\tilde{q}} \leftarrow \mathbf{\tilde{q}} + \frac{\Delta t}{2}\,\mathbf{M}^{-1}\mathbf{\tilde{p}}$
\STATE  $\mathbf{\tilde{F}} \leftarrow$ \texttt{DraftModel}.get\_forces($\mathbf{\tilde{q}}$)

\ECCOMMENT{EC: correct the bare draft prediction by the last-known historical target-draft error}
\STATE  $\langle \mathbf{\tilde{p}} \rangle \leftarrow$ [\cref{eq:bob}]$(\mathbf{\tilde{F}} \textcolor{eccolor}{+ \Delta \mathbf{F}})$ ; \; $\boldsymbol{\Sigma} \leftarrow$ [\cref{eq:bob}]$(T, \gamma, \Delta t)$
\STATE  $\mathbf{\tilde{p}} \leftarrow \mathcal{N}(\cdot\,; \langle \mathbf{\tilde{p}} \rangle, \boldsymbol{\Sigma})$
\STATE  $\mathbf{\tilde{q}} \leftarrow \mathbf{\tilde{q}} + \frac{\Delta t}{2}\,\mathbf{M}^{-1}\mathbf{\tilde{p}}$
\STATE $y' \leftarrow (\mathbf{\tilde q}, \mathbf{\tilde p}) $
\STATE \textbf{return} $(y', \mathbf{\tilde F})$
\end{algorithmic}

\textbf{Procedure} \textsc{verify\_step}$(y, y', \mathbf{\tilde F}, \textcolor{eccolor}{\Delta \mathbf{F},} \texttt{id}')$: \hfill \textcolor{darkgreen}{[asynchronous]}
\vspace{0.2em}

\FANCYCOMMENT{Runs asynchronously to main process in parallel \textsc{TargetModel} processes}
\begin{algorithmic}
\STATE $(\mathbf{\tilde q}, \mathbf{\tilde p}) \leftarrow y$; \; $(\mathbf{\tilde q'}, \mathbf{\tilde p'}) \leftarrow y'$; \; $\langle \mathbf{\tilde{p}} \rangle \leftarrow$ [\cref{eq:bob}]$(\mathbf{\tilde{F}} \textcolor{eccolor}{+ \Delta \mathbf{F}})$
\STATE  $\mathbf{\tilde{q}} \leftarrow \mathbf{\tilde{q}} + \frac{\Delta t}{2}\,\mathbf{M}^{-1}\mathbf{\tilde{p}}$ \FANCYCOMMENT{map $f$ in \cref{thm:thm1}}

\ECCOMMENT{EC cache $\Delta \mathbf{F}$ will be updated to $\Delta \mathbf{F}'$ once we add this step to \texttt{V} (assuming it is not flushed)}

\ECCOMMENT{Use \textbf{EC-uncorrected} draft forces for the new EC cache $\Delta \mathbf{F}'$, which avoids self-feedback instability}
\STATE  $\mathbf{F} \leftarrow$ \texttt{TargetModel}.get\_forces($\mathbf{\tilde{q}}$) ; \; \textcolor{eccolor}{$\Delta \mathbf{F}' \leftarrow \mathbf{F} - \mathbf{\tilde F}$} ; \; $\langle \mathbf{p} \rangle \leftarrow$ [\cref{eq:bob}]$(\mathbf{F})$ ; \; $\boldsymbol{\Sigma} \leftarrow$ [\cref{eq:bob}]$(T, \gamma, \Delta t)$

\FANCYCOMMENT{\textbf{Note}: reflection coupling \cref{eq:reflectionVerify} acts on $\mathbf{\tilde p}'$, i.e. the stored draft momentum \textbf{after} sampling, see \cref{thm:thm1}.}

\STATE $u \leftarrow \text{Uniform}([0,1])$ ; \; $\mathbf{p} \leftarrow$ \textsc{reflectionVerify}($\mathbf{\tilde{p}}'\,; u \mid \langle \mathbf{p} \rangle, \langle \mathbf{\tilde{p}} \rangle$)

\STATE  $\mathbf{q} \leftarrow \mathbf{\tilde{q}} + \frac{\Delta t}{2}\,\mathbf{M}^{-1}\mathbf{p}$ \FANCYCOMMENT{map $g$ in \cref{thm:thm1}}

\STATE  $x' \leftarrow (\mathbf{q}, \mathbf{p})$ \FANCYCOMMENT{By \cref{thm:thm1}, $x' \sim P(\cdot \, | \, y)$ if $y' \sim Q(\cdot \, | \, y)$}
\STATE \textbf{return} $(y', x', \textcolor{eccolor}{\Delta \mathbf{F}',} \texttt{id}')$ to main process
\end{algorithmic}

\textbf{Procedure} \textsc{process\_new\_returns}(\texttt{TargetModelPool}, \texttt{EP}, \texttt{CV}, \texttt{V}, $y$\textcolor{eccolor}{, $\Delta \mathbf{F}$}) \hfill \textcolor{darkgreen}{[synchronous]}
\vspace{0.2em}

\begin{algorithmic}
\STATE $\texttt{R} \leftarrow$ \textbf{gather\_new\_returns}(\texttt{TargetModelPool}) \FANCYCOMMENT{Returns empty if no new returns}
\FOR{each $(y', x', \textcolor{eccolor}{\Delta \mathbf{F}',} \texttt{id}')$ in $\texttt{R}$}
    \SHORTIF{$(y',\texttt{id}') \in \texttt{EP}$}{$\texttt{EP} \leftarrow [(y'',\texttt{id}'') \in \texttt{EP} \mid \texttt{id}'' \neq \texttt{id}']$ \textbf{else} \textbf{continue}} \FANCYCOMMENT{ignore step if flushed}
    \STATE $\texttt{CV} \leftarrow \texttt{CV} + [(x', \textcolor{eccolor}{\Delta \mathbf{F}',} \texttt{id}')]$
    \IF[\FANCYCOMMENT{Flush all downstream steps and roll back draft if the step was rejected}]{$x' \neq y'$}{
    \STATE $\texttt{EP} \leftarrow [(y'',\texttt{id}'') \in \texttt{EP} \mid \texttt{id}'' \leq \texttt{id}']$ ; \; $\texttt{CV} \leftarrow [(x'', \textcolor{eccolor}{\Delta \mathbf{F}'',} \texttt{id}'') \in \texttt{CV} \mid \texttt{id}'' \leq \texttt{id}']$ ; \; $y \leftarrow  x'$}
    \ENDIF
    \WHILE{$\texttt{CV} \neq [\, \,] \; \wedge \; \min(\texttt{CV}\texttt{[:,\textcolor{eccolor}{2}]}) < \min(\texttt{EP}\texttt{[:,1]} + [\infty])$}
        \STATE \textcolor{eccolor}{$(x', \Delta \mathbf{F}, \texttt{id}') \leftarrow \texttt{CV}.\textbf{pop}(\min(\texttt{CV}\texttt{[:,2]}))$} \ECCOMMENT{Updates EC cache $\Delta \mathbf{F}$ to the $\Delta \mathbf{F}'$ stored in this step}
        \STATE $\texttt{V} \leftarrow \texttt{V} + [(x', \texttt{id}')]$
    \ENDWHILE
\ENDFOR
\STATE \textbf{return} $(\texttt{EP}, \, \texttt{CV}, \, \texttt{V}, \, y\textcolor{eccolor}{, \Delta \mathbf{F}})$
\end{algorithmic}
\end{algorithm}

\section{Generalization to the (OBABO) Integrator}
\label{app:obabo}

While the main text develops LSD for the (ABOBA) integrator, which achieves high accuracy for configurational (position-based) averages \citep{Leimkuhler2013RobustAE}, the closely related (OBABO) splitting is preferable when momentum-based observables such as velocity autocorrelation functions are of primary interest \citep{Leimkuhler2013RobustAE}. We show here that LSD generalizes to (OBABO) via a time-staggering argument that reduces the problem to the (ABOBA) case.

\textbf{The (OBABO) scheme.}
With the same notation as \cref{eq:aboba}, the (OBABO) integrator reads
\begin{align}
\label{eq:obabo}
\text{(O)}\quad
\mathbf{p} &\leftarrow e^{-\gamma \Delta t/2}\,\mathbf{p} + \boldsymbol{\Sigma}_{1/2}^{\frac{1}{2}}\,\boldsymbol{\xi} \nonumber \\
\text{(B)}\quad
\mathbf{p} &\leftarrow \mathbf{p} + \Delta t/2\,\mathbf{F}(\mathbf{q}) \nonumber \\
\text{(A)}\quad
\mathbf{q} &\leftarrow \mathbf{q} + \Delta t\,\mathbf{M}^{-1}\mathbf{p} \nonumber \\
\text{(B)}\quad
\mathbf{p} &\leftarrow \mathbf{p} + \Delta t/2\,\mathbf{F}(\mathbf{q}) \nonumber \\
\text{(O)}\quad
\mathbf{p} &\leftarrow e^{-\gamma \Delta t/2}\,\mathbf{p} + \boldsymbol{\Sigma}_{1/2}^{\frac{1}{2}}\,\boldsymbol{\xi}',
\end{align}
where $\boldsymbol{\xi}, \boldsymbol{\xi}' \sim \mathcal{N}(\mathbf{0}, \mathbb{I})$ are independent and $\boldsymbol{\Sigma}_{1/2} \vcentcolon= \mathbf{M} k_B T \left( 1 - e^{-\gamma \Delta t} \right)$ is the covariance of a half-step (O) update. Note that (A) uses a full time step $\Delta t$ (vs.\ $\Delta t/2$ in ABOBA), while the (B) and (O) sub-step durations match those of the corresponding (ABOBA) sub-steps.

\textbf{Time-staggering reduces (OBABO) to (ABOBA).}
Concatenating successive (OBABO) steps and regrouping yields the pattern
\begin{equation}
\label{eq:obabo_stagger}
\underbrace{\text{OB}}_{\text{init.}} \;\; \text{A} \;\; \underbrace{\text{BOOB}}_{\text{=\,BOB}} \;\; \text{A} \;\; \underbrace{\text{BOOB}}_{\text{=\,BOB}} \;\; \text{A} \;\cdots
\end{equation}
Each (BOOB) block merges into a (BOB) step of the form \cref{eq:bob}: the two adjacent O$(\Delta t/2)$ half-steps compose to a single O$(\Delta t)$ full step, since the friction factors multiply as $e^{-\gamma\Delta t/2} \cdot e^{-\gamma\Delta t/2} = e^{-\gamma\Delta t}$ and the noise covariances sum as $\boldsymbol{\Sigma}_{1/2}(1+e^{-\gamma\Delta t}) = \boldsymbol{\Sigma}$. Moreover, both (B) half-steps within each (BOOB) block evaluate forces at the same position---set by the preceding (A) step, which is the only sub-step that modifies $\mathbf{q}$.

Define \emph{staggered states} $\bar y_n$ at the junctions between (BOB) and (A) blocks in \cref{eq:obabo_stagger}, offset from the (OBABO) outputs by an initial (OB) half-step: $\bar y_0 \vcentcolon= \text{OB}(y_0)$. The staggered transition $\bar y_{n-1} \mapsto \bar y_n$ reads
\begin{equation*}
\bar y_n = \text{BOB}\bigl(\text{A}(\bar y_{n-1})\bigr),
\end{equation*}
which has the structure of \cref{thm:thm1} with $f = \text{A}$ (the deterministic, force-independent position update) as the pre-processing map, the (BOB) Gaussian momentum update as the force-dependent stochastic part $P'$, $Q'$ (coupled identically to the (ABOBA) case via \textsc{reflectionVerify}) and $g = \text{id}$ as the (trivially invertible) post-processing map. \cref{thm:thm1} therefore guarantees both the coupling property and the maximal acceptance rate for LSD on the staggered states.

\textbf{Recovery of (OBABO) output states.}
It remains to extract the actual (OBABO) output states $y_n$ from the verified staggered trajectory. The (OBABO) output $y_n$ sits at the midpoint of the merged O$(\Delta t)$ step within the (BOB) block that maps $\bar y_{n-1} \mapsto \bar y_n$, i.e., after the first (BO) half of the corresponding (BOOB) but before the second (OB) half.

Write $\bar{\mathbf{p}}_{n-1}, \bar{\mathbf{p}}_n$ for the momenta of the staggered states $\bar y_{n-1}, \bar y_n$ and $\mathbf{q}_n$ for their shared position (the (BOB) block leaves positions fixed at the value set by the preceding (A) step). Both (B) half-steps inside the block evaluate the same target force $\mathbf{F}_n \vcentcolon= \mathbf{F}(\mathbf{q}_n)$, which is \emph{already known}: it was computed when the staggered transition $\bar y_{n-1} \mapsto \bar y_n$ was verified through \textsc{reflectionVerify}, so no additional force evaluation is needed to recover $y_n$. Undoing the surrounding (B) half-steps recovers the momenta entering and leaving the merged O$(\Delta t)$ step,
\begin{equation}
\label{eq:obabo_pinout}
\mathbf{p}_{\text{in}} = \bar{\mathbf{p}}_{n-1} + \tfrac{\Delta t}{2}\,\mathbf{F}_n, \qquad
\mathbf{p}_{\text{out}} = \bar{\mathbf{p}}_n - \tfrac{\Delta t}{2}\,\mathbf{F}_n,
\end{equation}
so both carry a force-dependent shift from the (B) steps. The merged O$(\Delta t)$ step itself decomposes into two O$(\Delta t/2)$ half-steps driven by independent noise $\boldsymbol{\xi}_1, \boldsymbol{\xi}_2 \sim \mathcal{N}(\mathbf{0}, \mathbb{I})$:
\begin{align}
\label{eq:obabo_halfsteps}
\mathbf{p}_{\text{mid}} &= e^{-\gamma \Delta t/2}\,\mathbf{p}_{\text{in}} + \boldsymbol{\Sigma}_{1/2}^{\frac{1}{2}}\,\boldsymbol{\xi}_1, \nonumber \\
\mathbf{p}_{\text{out}} &= e^{-\gamma \Delta t/2}\,\mathbf{p}_{\text{mid}} + \boldsymbol{\Sigma}_{1/2}^{\frac{1}{2}}\,\boldsymbol{\xi}_2,
\end{align}
and the (OBABO) output momentum is the intermediate $\mathbf{p}_{\text{mid}}$. Through \cref{eq:obabo_pinout}, $\mathbf{p}_{\text{mid}}$ has the \emph{force-dependent} mean $e^{-\gamma\Delta t/2}\,\mathbf{p}_{\text{in}} = e^{-\gamma\Delta t/2}\bigl(\bar{\mathbf{p}}_{n-1} + \tfrac{\Delta t}{2}\,\mathbf{F}_n\bigr)$; it is \emph{not} a centered Gaussian.

The staggered LSD step fixes $\bar{\mathbf{p}}_n$ and hence, via \cref{eq:obabo_pinout}, both $\mathbf{p}_{\text{in}}$ and $\mathbf{p}_{\text{out}}$ together with the weighted noise sum
\begin{equation}
\label{eq:obabo_w}
\mathbf{w} \vcentcolon= \boldsymbol{\Sigma}_{1/2}^{-\frac{1}{2}}\bigl(\mathbf{p}_{\text{out}} - e^{-\gamma\Delta t}\,\mathbf{p}_{\text{in}}\bigr)
= \boldsymbol{\Sigma}_{1/2}^{-\frac{1}{2}}\Bigl(\bar{\mathbf{p}}_n - e^{-\gamma\Delta t}\,\bar{\mathbf{p}}_{n-1} - \bigl(1+e^{-\gamma\Delta t}\bigr)\tfrac{\Delta t}{2}\,\mathbf{F}_n\Bigr)
= e^{-\gamma\Delta t/2}\,\boldsymbol{\xi}_1 + \boldsymbol{\xi}_2,
\end{equation}
which does not fix $\boldsymbol{\xi}_1, \boldsymbol{\xi}_2$ individually. Note that $\mathbf{w}$ depends on the (B)-step force $\mathbf{F}_n$ through $\mathbf{p}_{\text{in}}$ and $\mathbf{p}_{\text{out}}$. Since $\boldsymbol{\Sigma}_{1/2}$ is diagonal and $\boldsymbol{\xi}_1, \boldsymbol{\xi}_2$ are independent standard Gaussians, the conditional distribution of $\boldsymbol{\xi}_1$ given $\mathbf{w}$ factorizes across components:
\begin{equation}
\label{eq:obabo_conditional}
\boldsymbol{\xi}_1 \mid \mathbf{w} \;\sim\; \mathcal{N}\!\left(\frac{e^{-\gamma\Delta t/2}}{1+e^{-\gamma\Delta t}}\,\mathbf{w},\;\frac{1}{1+e^{-\gamma\Delta t}}\,\mathbb{I}\right).
\end{equation}
Drawing $\boldsymbol{\xi}_1^*$ from \cref{eq:obabo_conditional} and substituting into the first line of \cref{eq:obabo_halfsteps} yields the (OBABO) output momentum
\begin{equation}
\label{eq:obabo_pmid}
\mathbf{p}_{\text{mid}} = e^{-\gamma\Delta t/2}\bigl(\bar{\mathbf{p}}_{n-1} + \tfrac{\Delta t}{2}\,\mathbf{F}_n\bigr) + \boldsymbol{\Sigma}_{1/2}^{\frac{1}{2}}\,\boldsymbol{\xi}_1^*,
\end{equation}
which makes the contribution of the (B)-step force explicit. As (BOB) leaves positions unchanged, the output position $\mathbf{q}_n$ coincides with that of the staggered state $\bar y_n$. In summary, one can run LSD entirely in the staggered frame and recover (OBABO) trajectory outputs by sampling a half-step into the merged Ornstein--Uhlenbeck step, conditioned on the staggered result and shifted by the (B)-step forces $\mathbf{F}_n$. Because these target forces were already evaluated during verification, the recovery reuses them and requires \emph{no} recomputation of $\mathbf{F}$.

\section{Kernel parity test of LSD and target trajectory distribution}
\label{app:kernel}

\textbf{Background.}
Maximum Mean Discrepancy (MMD) \citep{gretton2012maximum} is a distance measure between probability distributions based on their embeddings in a Reproducing Kernel Hilbert Space (RKHS). Given a kernel $k$ with associated RKHS $\mathcal{H}$, a distribution $P$ can be represented by its \textbf{mean embedding}:
\[
\mu_P = \mathbb{E}_{X \sim P}[k(\cdot, X)] \in \mathcal{H}.
\]
The MMD between two distributions $P$ and $Q$ is then defined as the RKHS norm of the difference between their embeddings:
\[
\text{MMD}(P, Q) = \|\mu_P - \mu_Q\|_{\mathcal{H}}.
\]
Expanding the squared norm using the reproducing property yields:
\[
\text{MMD}^2(P, Q) = \mathbb{E}_{X,X'}[k(X, X')] - 2\mathbb{E}_{X,Y}[k(X, Y)] + \mathbb{E}_{Y,Y'}[k(Y, Y')].
\]
Given samples $\{x_i\}_{i=1}^n \sim P$ and $\{y_j\}_{j=1}^m \sim Q$, the \textbf{biased empirical estimator} replaces expectations with sample averages (including diagonal terms):
\[
\widehat{\text{MMD}}^2_b = \frac{1}{n^2}\sum_{i,j}k(x_i, x_j) - \frac{2}{nm}\sum_{i,j}k(x_i, y_j) + \frac{1}{m^2}\sum_{i,j}k(y_i, y_j).
\]
This estimator is biased because including terms like $k(x_i, x_i)$ introduces positive bias, though it remains consistent. We use it instead of the unbiased estimator (which corrects for the diagonal contribution) to avoid MMD negative values.
A natural question is whether $\text{MMD}(P, Q) = 0$ implies that $P$ and $Q$ are the same distribution. This property holds when the kernel $k$ is \textbf{characteristic}, meaning that the mean embedding map $P \mapsto \mu_P$ is injective. Under this condition, $\text{MMD}(P, Q) = 0$ if and only if $P = Q$ in the sense of \textbf{equality in distribution}---that is, $P(A) = Q(A)$ for all measurable sets $A$, or equivalently, the two distributions assign the same probability to every event. Many commonly used kernels are characteristic, including the Gaussian (RBF) kernel $k(x, y) = \exp(-\|x - y\|^2 / 2\sigma^2)$ and the Laplacian kernel. This property makes MMD a powerful tool for two-sample testing, as it provides a true metric on the space of probability distributions when a characteristic kernel is used.

\textbf{Experimental setup.}
We sample 1000 length-10 trajectories of for 108-atom FCC copper, repeating this with 3 random seeds for each model (\textsc{UMA-S}, \textsc{Orb-v3-direct-20-omat} and their LSD combination). Every sampled trajectory is therefore a vector in $\mathbb{R}^{6 \times 108 \times 10}$ (positions and momenta of $108$ atoms $(\mathbf{x}_n, \mathbf{p}_n) \in \mathbb{R}^{6 \times 108}$, concatenated for 10 time steps $n$). We then compute the maximum mean discrepancy (MMD) between the pairs of high-dimensional trajectory point clouds, using the Gaussian kernel with bandwidth $\sigma$ set to the median $L^2$ distance between vectors. To obtain \cref{fig:kernel_fig}, we repeat this procedure for all 6 pairs of different seeds for the same pair of models, and take the average and empirical standard deviation over the obtained MMD values.

\textbf{Further MMD results.}
We provide additional MMD results for the combination of a \texttt{UMA-S} target and an \texttt{EMT} on the same 108-atom copper system in \cref{fig:mmd2}.
\begin{figure}[t]
\centering
\includegraphics[width=0.25\linewidth]{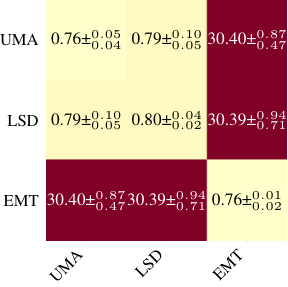}
\caption{MMD distances (Gaussian kernel, median distance $\sigma$, amplified by $10^3$) for \texttt{UMA-S}, \texttt{EMT} and their LSD combination on 108-atom copper. LSD and \texttt{UMA-S} are as close to each other as \texttt{UMA-S} to itself given different random seeds, while EMT predicts a significantly different trajectory distribution.}
\label{fig:mmd2}
\end{figure}

\section{Relaxation of Target Model Guarantees}
\label{app:force_slack}
\textbf{Force slack.}
If EC or other alignment is insufficient, a plausible final option is to controllably relax the guaranty of matching the exact target trajectory distribution. The idea of \emph{force slack} is to accept a small component-wise force error (slack) $\epsilon_F > 0$ and, at verification time, push the target forces as much towards the draft forces as admissible within an $L_\infty$ ball of norm $\epsilon_F$. In other words, we substitute the true target forces $\mathbf{F}$ by $\mathbf{F}_{(\epsilon)} = \mathbf{F} + \text{clip}(-\epsilon_F; \mathbf{\tilde F} - \mathbf{F}; \epsilon_F)$, where the function $\text{clip}(l,\mathbf{x},u)\colon \mathbb{R}^{3N} \rightarrow \mathbb{R}^{3N}, \, x^{(i)} \mapsto \text{min}\left(\text{max}\left(x^{(i)}, l\right), u\right)$, with $l < u \in \mathbb{R}$, acts per vector component $1 \leq i \leq 3N$. Intuitively, this has the effect of pushing the two Gaussians of the momentum step (cf. \cref{fig:reflection-a,fig:reflection-b,fig:reflection-c}) closer together. Unlike speculative error correction, which modifies only the drafting process, force slack shifts the target forces at verification time and therefore breaks the exact coupling to the target distribution. Instead, LSD with slack couples the draft distribution to the distribution of the relaxed target field $\mathbf{F}_{(\epsilon)}$ with $\lVert \mathbf{F}_{(\epsilon)}(\mathbf{q}) - \mathbf{F}(\mathbf{q})\rVert_\infty \leq \epsilon_F$ $\forall \mathbf{q} \in \mathbb{R}^{3N}$.

\textbf{Negative impact on diffusion coefficients.}
Unfortunately, we find empirically that the usage of force slack significantly degrades the estimation of diffusion coefficients, cf. \cref{fig:slack}. We therefore recommend using LSD with its strict guarantees.

\begin{figure}[H]
\centering
\includegraphics[width=0.5\linewidth]{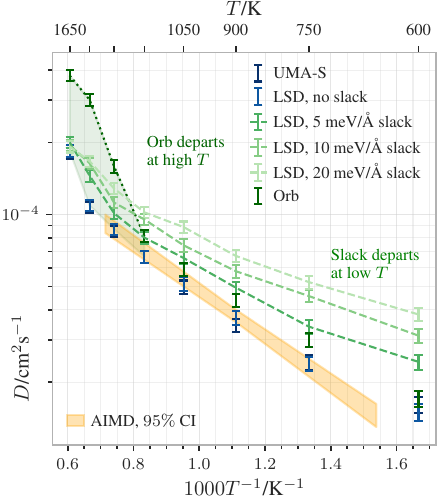}
\vskip -0.05in
\caption{Diffusion coefficients for \texttt{UMA-S}, \texttt{Orb-v3-direct-omat} and their LSD combination degrade with increasing slack, with diffusivity being overestimated even more than by pure \texttt{Orb} in the low-temperature range.}
\label{fig:slack}
\end{figure}

\section{Speedups for Bulk Water}
We measure further speedups on bulk water. Due to the lower temperature of $300K$, rejection rates are generally significantly higher and speedups at the fixed, low friction (high friction timescale) of $\tau=1000\text{fs}$ which we used for copper are comparably low. We therefore plot three different friction timescales to show a wider range of possible results. It becomes evident that the rejection rate and therefore the speedup is highly dependent on the chosen friction range.
\label{app:water_speedups}
\begin{figure}[t]
\centering
\includegraphics[width=0.5\linewidth]{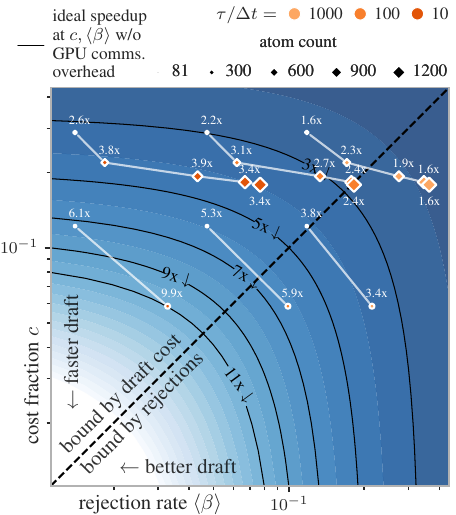}
\vskip -0.05in
\caption{Measured LSD walltime speedups (white labels) for (target, draft): (\texttt{UMA-S}, \texttt{Orb-v3-direct}) (\diabox1{cblack}) at 8 GPUs and (\texttt{UMA-M}, \texttt{Orb-v3-direct}) (\circbox1{cblack}) at 32 GPUs, on bulk water at multiple friction timescales $\tau$; plotted against the measured draft / target cost fractions $c$ and avg. rejection rates $\langle \beta \rangle$. All seen speedups are bounded by the ideal \cref{eq:empirical_speedup}, plotted as the contour landscape.}
\label{fig:pareto_water}
\end{figure}

\newpage
\section{Radial Distribution Functions for Bulk Water}
\label{app:rdf_water}

To further confirm that LSD preserves the target model distribution in practice, we computed the O--O and H--H radial distribution functions (RDFs) for bulk water using the same \texttt{UMA-S} target and \texttt{UMA-tiny-omol} draft setup as studied in the main body. We repeated the comparison at two different Langevin friction values ($\gamma = 0.1\,\mathrm{ps}^{-1}$ and $\gamma = 1\,\mathrm{ps}^{-1}$) to confirm that the result holds across different dynamical regimes.

\begin{figure}[h]
\centering
\includegraphics[width=0.6\linewidth]{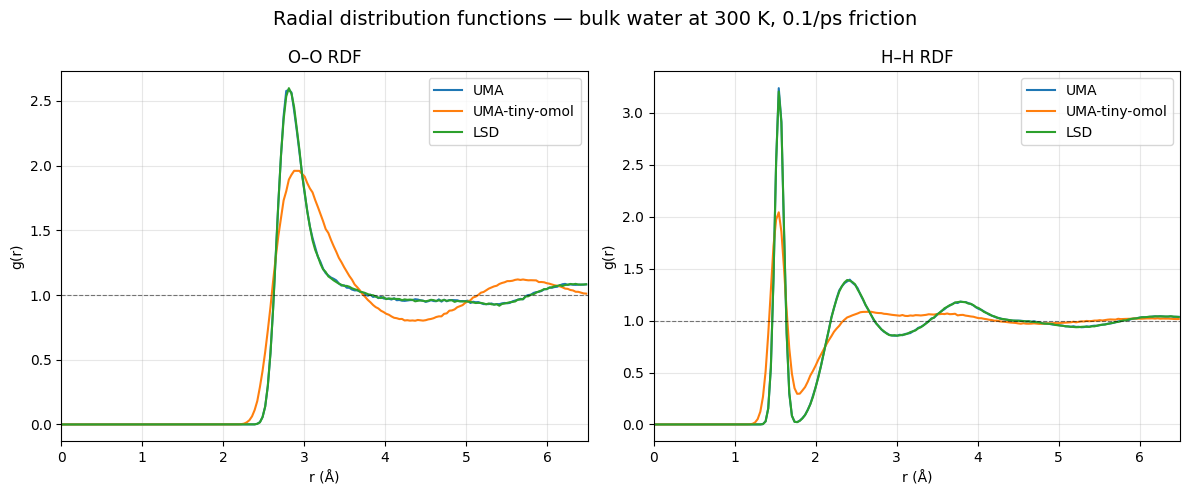}
\caption{O--O and H--H radial distribution functions for bulk water at $\gamma = 0.1\,\mathrm{ps}^{-1}$, comparing the target model (\texttt{UMA-S}), LSD (target + draft), and the draft model (\texttt{UMA-tiny-omol}). LSD and target curves are indistinguishable, while the draft model deviates significantly.}
\label{fig:rdf_water_low_friction}
\end{figure}

\begin{figure}[h]
\centering
\includegraphics[width=0.6\linewidth]{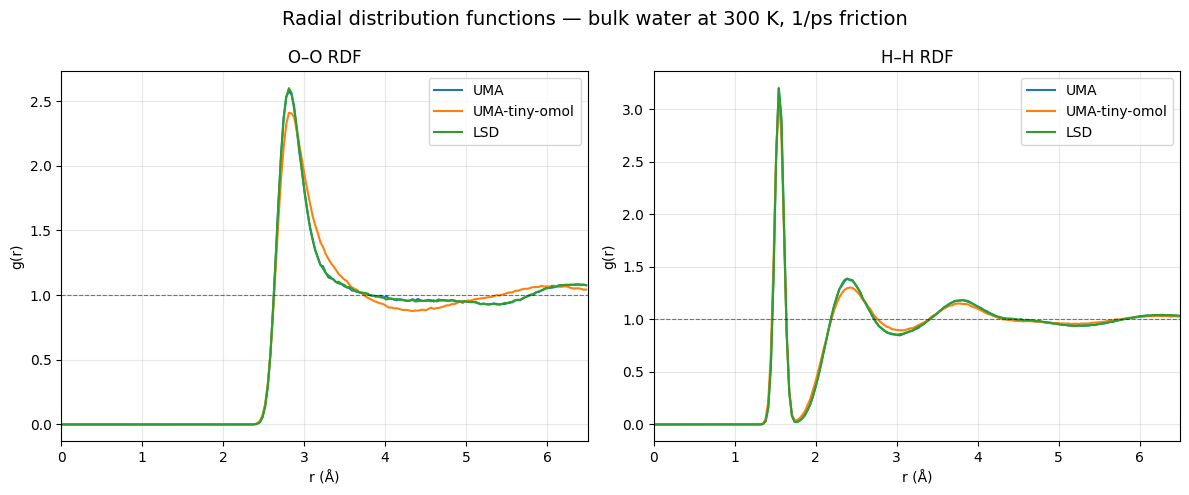}
\caption{O--O and H--H radial distribution functions for bulk water at $\gamma = 1\,\mathrm{ps}^{-1}$. LSD again matches the target model to a near-indistinguishable level, confirming distributional parity across friction regimes.}
\label{fig:rdf_water_high_friction}
\end{figure}

In both cases, the LSD and target model RDF curves agree to a near-indistinguishable level, while the draft model curve deviates noticeably. This is fully consistent with our strong mathematical guarantees: since LSD samples exactly from the target model trajectory distribution (up to machine precision), all observables computed from LSD trajectories must match those of the target model.

\section{Rejection Rate Scaling for Alanine Oligopeptides and LGPS}
\label{app:rejection_rate_scaling_extended}

To evaluate how well the rejection rate scaling law (\cref{eq:scaling_law}) generalizes beyond the FCC copper system studied in the main body, we ran new comparisons of observed rejection rates against theory on four alanine oligopeptides (ala2, ala4, ala6, ala8) ($300\,\mathrm{K}$, $\gamma = 5\,\mathrm{ps}^{-1}$, see \cref{tab:ala_no_ec,tab:ala_ec}) and our LGPS solid electrolyte ($1200\,\mathrm{K}$, $\gamma = 0.2\,\mathrm{ps}^{-1}$, see \cref{tab:lgps_no_ec,tab:lgps_ec}). The empirical constant $\varepsilon$ is averaged across its estimates from all atom counts for a more robust fit. Several observations follow from these results.
\begin{enumerate}
    \item EC reduces rejection rates across the board. For LGPS, EC is essential to access the low-friction regime (in which Li ions can diffuse freely) at practical rejection rates, reducing rejections from $\sim\!41\%$ to $\sim\!11\%$ at 200 atoms.
    \item For the alanine peptides, the scaling fit systematically overestimates rejection rates at low atom counts and underestimates at high atom counts---i.e., rejections grow somewhat faster than the homogeneous scaling law predicts.
\end{enumerate}
This points to a partial breakdown of the spatial independence assumption (\cref{simp:simplification2}), as expected for a heterogeneous system with non-uniform force errors. Nevertheless, the theory models the correct qualitative trend, and the true increase in rejections, while slightly exceeding predictions, is slow enough to define a practical system-size regime for LSD.

\begin{table}[h]
\caption{Observed vs.\ predicted rejection rates for alanine oligopeptides without error correction ($300\,\mathrm{K}$, $\gamma = 5\,\mathrm{ps}^{-1}$).}
\label{tab:ala_no_ec}
\centering
\begin{tabular}{lrrr}
\hline
System & Atom count & Rejection rate [\%] & Theory [\%] \\ \hline
ala2 & 22 & $4.5 \pm 0.8$ & 5.6 \\
ala4 & 42 & $7.3 \pm 0.5$ & 7.7 \\
ala6 & 62 & $10.4 \pm 0.9$ & 9.3 \\
ala8 & 82 & $12.2 \pm 0.5$ & 10.7 \\ \hline
\end{tabular}
\end{table}

\begin{table}[h]
\caption{Observed vs.\ predicted rejection rates for alanine oligopeptides with error correction ($300\,\mathrm{K}$, $\gamma = 5\,\mathrm{ps}^{-1}$).}
\label{tab:ala_ec}
\centering
\begin{tabular}{lrrr}
\hline
System & Atom count & Rejection rate [\%] & Theory [\%] \\ \hline
ala2 & 22 & $3.5 \pm 0.5$ & 4.2 \\
ala4 & 42 & $5.9 \pm 0.4$ & 5.8 \\
ala6 & 62 & $7.6 \pm 1.0$ & 7.1 \\
ala8 & 82 & $8.8 \pm 1.1$ & 8.1 \\ \hline
\end{tabular}
\end{table}

\begin{table}[h]
\caption{Observed vs.\ predicted rejection rates for LGPS without error correction ($1200\,\mathrm{K}$, $\gamma = 0.2\,\mathrm{ps}^{-1}$).}
\label{tab:lgps_no_ec}
\centering
\begin{tabular}{rrr}
\hline
Atom count & Rejection rate [\%] & Theory [\%] \\ \hline
200 & $77.7 \pm 1.3$ & 80.8 \\
400 & $92.8 \pm 1.3$ & 93.5 \\
600 & $97.9 \pm 0.4$ & 97.6 \\
800 & $99.5 \pm 0.3$ & 99.1 \\ \hline
\end{tabular}
\end{table}

\begin{table}[h]
\caption{Observed vs.\ predicted rejection rates for LGPS with error correction ($1200\,\mathrm{K}$, $\gamma = 0.2\,\mathrm{ps}^{-1}$).}
\label{tab:lgps_ec}
\centering
\begin{tabular}{rrr}
\hline
Atom count & Rejection rate [\%] & Theory [\%] \\ \hline
200 & $18.0 \pm 1.0$ & 16.7 \\
400 & $23.1 \pm 0.9$ & 23.5 \\
600 & $27.6 \pm 1.2$ & 28.6 \\
800 & $32.1 \pm 0.8$ & 32.8 \\ \hline
\end{tabular}
\end{table}

\newpage

\section{Extended Graph Parallelism Ceiling Analysis}
\label{app:gp_ceiling}

In \cref{sec:experiments} (Figure~6), we compare LSD throughput against graph parallelism (GP) as a function of atom count, observing that LSD outperforms GP by up to an order of magnitude in the $\sim\!1000$-atom regime. A natural question is whether the ``Graph Parallelism Ceiling'' observed in that figure is specific to the UMA model and implementation used, or whether it reflects a general feature of current state-of-the-art MLIPs.

\textbf{Minimum latency across models.} To answer this, we benchmarked single-model inference times on a minimal two-atom graph for a variety of modern MLIPs on 80\,GB H100 GPUs. These minimum latencies define, for each model, the best possible throughput under \emph{any} multi-GPU implementation based on spatial domain decomposition, since even a one-atom system cannot be evaluated faster. They therefore represent a hard ceiling on GP throughput that is independent of the specific implementation.

\begin{table}[h]
\centering
\caption{Minimum inference latency per step for a selection of state-of-the-art MLIPs on an H100 GPU (two-atom graph). These define a universal throughput ceiling for any spatial parallelism approach.}
\label{tab:min_latency}
\begin{tabular}{ll}
\hline
Model & Min.\ latency (ms) \\ \hline
UMA-s (compiled) & 20 \\
MACE-MP (small, cuEQ) & 18 \\
MACE-MP (medium, cuEQ) & 25 \\
MACE-OMoL (large, no cuEQ) & 25 \\
SevenNet Omni FlashTP & 20 \\
ORBv3-conservative & 10 \\ \hline
\end{tabular}
\vskip 10pt
\centering
\caption{Measured LSD inference throughput (queries per second, qps) at selected GPU counts and atom counts for the UMA-S + ORBv3-direct draft-target pair on copper. GP throughput ceiling from \cref{tab:min_latency} shown for reference.}
\label{tab:lsd_qps_gpu_counts}
\begin{tabular}{lrrr}
\hline
GPU count & Atom count & LSD qps & GP ceiling (qps) \\ \hline
2  & 108  & $\sim\!42$ & $\leq 50$ \\
2  & 500  & $\sim\!35$ & $\leq 50$ \\
4  & 108  & $\sim\!71$ & $\leq 50$ \\
4  & 500  & $\sim\!58$ & $\leq 50$ \\
8  & 108  & $\sim\!100$ & $\leq 50$ \\
8  & 2000 & $\sim\!22$ & $\leq 50$ \\ \hline
\end{tabular}
\end{table}

Minimum inference times of around 20\,ms are a general feature across current SOTA MLIPs, with the sole exception of ORBv3-conservative at $\sim\!10\,\mathrm{ms}$. Most modern highly accurate MLIPs therefore cannot exceed $\sim\!50$ steps per second (qps) under any GP implementation, even assuming perfectly linear strong scaling. Note that this paper uses ORBv3-direct as the LSD \emph{draft} model (not conservative), which is faster still; the above table refers to models that would play the role of the target model in LSD.

\textbf{LSD throughput at lower GPU counts.} The main body focuses on the theoretical throughput optimum from \cref{subsec:bounds}. To address the practical scenario where fewer GPUs are available, we additionally report measured LSD inference speeds at lower GPU counts in \cref{tab:lsd_qps_gpu_counts}. These results confirm that LSD's advantage over GP persists across different hardware configurations in the $\sim\!1000$-atom regime.

These results demonstrate that LSD provides meaningful speedups over GP already at 2--4 GPUs for moderate atom counts, and that the advantage grows with additional GPUs. At larger atom counts ($\gtrsim\!2000$), rejection rates increase (cf.\ \cref{eq:scaling_law}) and LSD throughput falls below the GP ceiling, consistent with the crossover observed in the main body.

\section{Ramachandran Plots for Alanine Dipeptide}
\label{app:ramachandran}

Alanine dipeptide is a standard benchmark for evaluating conformational sampling in MD, as its Ramachandran plot (the distribution of backbone dihedral angles $\phi$ and $\psi$) is well characterized experimentally and computationally. We use it here to provide two additional validations: first, that LSD preserves target model distributions for a biologically relevant observable, and second, to discuss the solvation approximations used in our alanine dipeptide experiments throughout this work.

All Ramachandran plots in this section were obtained from $15\,\text{ns}$ Langevin trajectories at $T = 500\,\text{K}$ (for faster convergence in a single-trajectory setup) and $\gamma = 1\,\text{ps}^{-1}$, with frames recorded every $50\,\text{ps}$.

\subsection{LSD preserves the target Ramachandran distribution}
\label{app:ramachandran_lsd}

\begin{figure}[h]
\centering
\includegraphics[width=0.85\linewidth]{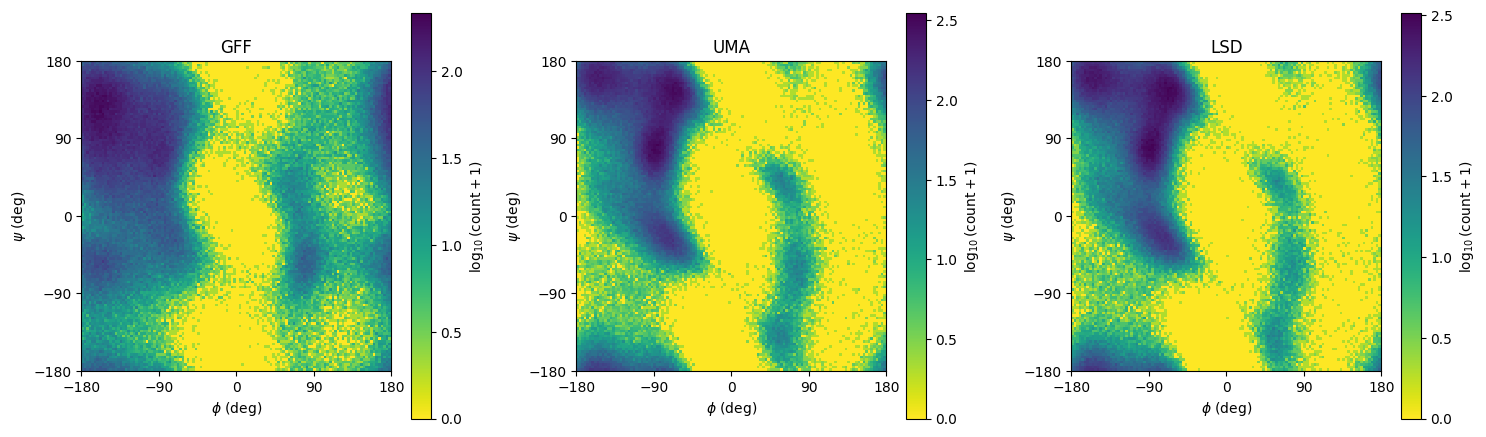}
\caption{Ramachandran plots of alanine dipeptide at $500\,\text{K}$ with implicit solvent. Left: the Grimme Force Field (GFF) draft model. Middle: the UMA-S target model. Right: LSD using GFF as draft and UMA-S as target. The LSD and UMA-S distributions are visually indistinguishable, while GFF shows a washed-out basin structure and an overpopulation of the $\phi > 0$ half.}
\label{fig:ramachandran_lsd}
\centering
\vskip 1pt
\includegraphics[width=0.85\linewidth]{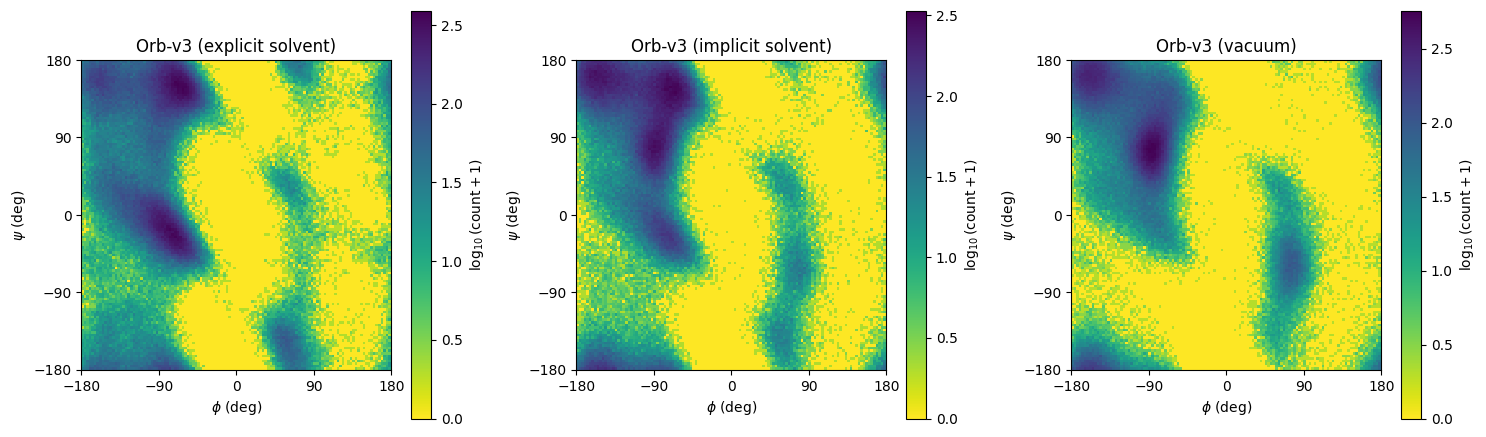}
\caption{Ramachandran plots of alanine dipeptide at $500\,\text{K}$ under three solvation conditions, all obtained with Orb-v3. Left: explicit solvation in $100$ water molecules. Middle: implicit solvent (using GFF-provided partial charges, SA terms, etc.\ as a correction to the MLIP energy). Right: vacuum (no solvent). The implicit solvent model corrects the vacuum result but still overpopulates basins like $\text{C}_{7\text{eq}}$ relative to explicit solvation.}
\label{fig:ramachandran_solvation}
\end{figure}

\cref{fig:ramachandran_lsd} compares the Ramachandran distributions obtained with the classical Grimme Force Field (GFF), as a draft model, UMA-S as the target model, and their LSD combination. The GFF result is qualitatively crude: the basin structure is washed out and the positive-$\phi$ half of the plot is strongly overpopulated relative to UMA-S. In contrast, the UMA-S and LSD Ramachandran plots are essentially indistinguishable. This provides another independent parity result, confirming on a biologically relevant conformational observable that LSD faithfully preserves the target model distribution of trajectories.

\subsection{Solvation approximations}
\label{app:ramachandran_solvation}

In all alanine dipeptide experiments in this work, we used an implicit solvent model rather than explicit solvation. The reason is that LSD in its current form and our model pairs is not yet practical with explicit solvent, as the presence of many water molecules (which dominate the atom count of the solvated system) drives up the rejection rate to impractical levels.

\cref{fig:ramachandran_solvation} compares Ramachandran plots from three solvation approaches using Orb-v3: explicit solvation in $100$ water molecules, implicit solvent, and vacuum. The implicit solvent model used here is crude, applying the partial charges, solvent-accessible surface area (SA) terms and other corrections provided by GFF as a fast additive correction to the MLIP energy. While this clearly improves over the vacuum result (which e.g. strongly overpopulates the $\text{C}_{7\text{eq}}$ basin known to be suppressed in aqueous solution) the implicit solvent correction remains incomplete: $\text{C}_{7\text{eq}}$ is still visibly overpopulated compared to the explicit solvent reference.

For the parity study in this work, this approximation is acceptable, as the implicit solvent is applied consistently to both draft and target and our main claim is distributional equivalence between LSD and the target, not physical accuracy of the solvation model. However, for investigations beyond parity, better implicit solvent approximations tuned to the baseline MLIP and its implicitly learned partial charge model should be used.

Looking forward, it would be worthwhile to investigate how to reduce LSD rejection rates in explicitly solvated systems. One promising direction is to fine-tune the draft model directly on target-force-labeled, explicitly solvated structures (or even on pure bulk solvent) in order to drive down the rejection rates on the solvent atoms that dominate the atom count of the solvated system.

\section{Models and Inference Settings}
\label{app:model_inference_settings}
All experiments in the paper were done on 80GB NVidia H100 GPUs, Python 3.12, Pytorch-2.8. \cref{tab:models_table} lists the models that were used and their source.

\begin{table}[H]
    \caption{Models and settings used in the paper}
    \label{tab:models_table}
    \begin{tabularx}{\textwidth}{l X X}
        \hline
        Model & Settings & Source/Description \\ \hline
        \verb|UMA-s-1p1| & turbo (TF32, compiled, merged MoLE) & public v1.1 release on Huggingface \\
        \hline
        \verb|UMA-m-1p1| & TF32, activation checkpointing, merged MoLE & public v1.1 release on Huggingface \\
        \hline
        \verb|UMA-tiny-omol| & turbo (TF32, compiled, merged MoLE) & UMA arch with single task, L2M2, 2 Layer, 500k Active params, 18M total params (32 MoLE experts) trained on Omol25 dataset \\
        \hline
        \verb|UMA-tiny-omat| & turbo (TF32, compiled, merged MoLE) & UMA arch with single task, L2M2, 2 Layer, 500k Active params, 18M total params (32 MoLE experts) trained on Omat24 dataset \\
        \hline
        \verb|Orb-v3-direct_20_omat| & default (TF32, compiled) & public v3 release \\
        \hline
        \verb|orb-v3-conservative-inf-omat| & default (TF32, compiled) & public v3 release \\
        \hline
        \verb|orb-v3-conservative-inf-omol|  & default (TF32, compiled) & public v3 release \\ \hline
    \end{tabularx}
\end{table}

\end{document}